\newtheorem{thm}{Theorem}[section] 
\newtheorem{lemma}[thm]{Lemma}
\newtheorem{theorem}[thm]{Theorem}
\newtheorem{corollary}[thm]{Corollary}
\newcommand{\cX}{{\cal{X}}}
\newcommand{\cF}{{\cal{F}}}
\newcommand{\cY}{{\cal{Y}}}
\newcommand{\cD}{{\cal{D}}}
\newcommand{\bx}{\mathbf{x}}
\newcommand{\reals}{\mathbb{R}}
\begin{document}
%
\title{Learn on Source, Refine on Target: \\ A Model Transfer Learning Framework with Random Forests}
%
%
%
%

\author{Noam~Segev, Maayan Harel, Shie Mannor, Koby Crammer and Ran El-Yaniv
\IEEEcompsocitemizethanks{
\IEEEcompsocthanksitem N. Segev and R. El-Yaniv are with the Department of Computer Sciences, Technion-Israel Institute of Technology, Israel\protect\\
E-mail: [nsegev, rani]@cs.technion.ac.il
\IEEEcompsocthanksitem M. Harel, S. Mannor and K. Crammer are with the Department of Electrical Engineering, Technion-Israel Institute of Technology, Israel\protect\\
E-mail: [maayanga,shie,koby]@ee.technion.ac.il}
}

%
%

\markboth{Learn on Source, Refine on Target: A Model Transfer Learning Framework with Random Forests}%
{Learn on Source, Refine on Target: A Model Transfer Learning Framework with Random Forests}
\IEEEtitleabstractindextext{%
\begin{abstract}
We propose novel \emph{model transfer-learning} methods that refine a decision forest model $M$ learned within a ``source'' domain using a training set sampled from a ``target'' domain, assumed to be a variation of the source. 
We present two random forest transfer algorithms. 
The first algorithm searches greedily for locally optimal modifications of each tree structure by trying to 
locally expand or reduce the tree around individual nodes. 
The second algorithm does not modify structure, but only the parameter (thresholds) associated with decision nodes. 
We also propose to combine both methods by considering an ensemble that contains the union of  the two forests. 
The proposed methods exhibit impressive experimental results over a range of problems.
\end{abstract}

\begin{IEEEkeywords}
	Transfer learning, model transfer, random forest, decision tree
\end{IEEEkeywords}}

\maketitle

\IEEEdisplaynontitleabstractindextext

%
\IEEEpeerreviewmaketitle

\IEEEraisesectionheading{\section{Introduction}\label{sec:introduction}}

%
%
%
%

\IEEEPARstart{C}{onsider} a software company selling a trained predictive model $M$ to a community of consumers.
The generic classifier $M$ was constructed using a very large and expensive dataset $D$. 
While the generic classifier is very accurate over the ``source" $D$, each of the individual consumers needs to apply $M$
in a specific ``target" context $D'$ with its own idiosyncrasies and noise parameters. 
The manufacturer can neither share its dataset $D$ with its consumers
nor afford to retrain an individual model for each of them (based on both $D$ and $D'$).
What would be a good approach to adapt the model $M$ to each individual context using a relatively small training set?

In this paper we focus on the setting of \emph{model transfer} (MT) whereby
the adaptation of a given source model to a target domain relies on a relatively small training set from the target. 
In contrast to general transfer learning frameworks
(such as \emph{instance transfer}, see Sec.~\ref{sec:comparingToInstance}),
in model transfer no training examples are available from the source domain during adaption for whatever reason, 
e.g., storage capacity or data privacy.
This limitation makes model transfer a restrictive and more challenging type of transfer learning.

There are numerous practical scenarios where model transfer is essential, 
whereas the source/target data sharing required by standard transfer learning methods is impermissible. 
For example, Microsoft's Kinect performs human pose recognition using random forests \cite{shotton2013real}, 
which can be improved with user-specific training data to accommodate environmental changes (e.g., lighting and furniture) or mechanical ones. 
One of our experimental settings addresses a conceptually similar case.
In general, when the model manufacturer cannot send the data to the model consumer or the consumer cannot send the data to the manufacturer, be it due, for example, 
to memory limitations (at the consumer's box) or computational/communication constraints (at/to the manufacturer's site), 
model transfer is interesting as a potentially viable solution.
It is certainly conceivable that model transfer for machine learning will be extremely widespread in the near future.

Our own motivation to consider model transfer arose in a collaborative project with a leading cyber fraud detection company dealing with online bank transactions.
The company created a powerful fraud detection model based on data collected from a number of banks (the ``source'' domain).
However, new clients (banks) operate in different contexts (the ``target'' domains), 
where the type of fraud committed might differ from the generic frauds, due to variations in transaction protocols, 
geo-demographics and other factors. 
Due to regulations and secrecy requirements, the company is forbidden to share its dataset with its clients, 
and many of the clients were forbidden to share their own datasets with the company. 

While MT isn't new, no single set of assumptions exist that define the model transfer setting, 
and thus existing model transfer techniques vary in their approaches. 
However, model transfer techniques typically resort to regularizing the learning of the target domain using the model learned for the source domain.
This can be achieved, e.g., by using a biased regularizer \cite{liao2005logistic, kienzle2006personalized, yang2007cross, duan2009domain}, 
or aggregating multiple source-target predictors \cite{rettinger2006boosting, luo2008transfer, ruckert2008kernel}. 
A potential drawback of this regularization paradigm is its limited capacity to accommodate local changes between the source and target distributions, 
as these techniques typically focus on optimizing a global regularization.

In contrast, the techniques we developed emphasize simple model transformations based on local 
(and greedy) changes.
We propose novel model transfer techniques that rely on decision trees (DTs).
As non-linear models, DTs can excel in learning non-linear decision rules,
and their hierarchical structure enables detection and accommodation of non-linear transformations from source to target.
Our methods are motivated by two frequently occurring transformations between the source and target domains:
(1) translations (shifts) of the distributions of individual features, 
and (2) transformations in which a set of features needs to be refined or made coarser to fit the target problem.
Each of the two DT techniques we propose is designed to capture one of these types.

In general, when dealing with DT learning, one has to carefully guard against overfitting. 
The two common techniques to regularize DTs are pruning and voting ensembles \cite{mehta1995mdl, zhang2012ensemble}. 
Pruning is a technique to reduce the size of an existing decision tree by replacing internal nodes in the tree with leaf nodes. 
By reducing the tree size, one reduces the complexity of the classifier and hopefully removes sections of the tree that were based on a few noisy samples.
However, we utilize voting ensembles, 
whereby multiple trees are generated and a forest is built by applying our DT induction algorithms. 

Each of these forests alone can be used for transfer learning, 
but we observed that the two methods tend to excel in different problems.
While a judicious use of these algorithms based on prior knowledge of the problem at hand may suffice, 
we propose to create a diverse ensemble consisting of the union of the models generated by both methods.
The resulting  algorithm is capable of modeling complex, real world, source-to-target transformations,
while performing better than, or almost as well as, its underlying constituents in most cases.
This union algorithm is relatively easy to implement, 
requires modest hyper-parameter tuning, can effectively exploit intensive computational resources to handle large-scale problems, and improves state-of-the-art performance on a range of problems.

After introducing our learning setup in Sec.~\ref{sec:notations}, 
we present the two new algorithms in Sec.~\ref{sec:algs} 
and provide insights into the algorithms' strengths and weaknesses using synthetic examples. 
In Sec.~\ref{sec:experiments}, we present extensive experiments over a number of datasets.
The results demonstrate the effectiveness of our method, 
which often outperforms several state-of-the-art transfer learning methods.
Related work is discussed in Sec.~\ref{sec:related},
followed by discussion about the advantage of our methods can be found in Sec.~\ref{sec:explain-mix}. 

\section{Preliminary Definitions}
\label{sec:notations}

A \emph{domain} $\cD = \left( \cX, \cY, P \right)$ consists of an $r$-dimensional feature space $\cX$,
a label space $\cY$, and a probability distribution $P \left(\bx, y \right)$, 
where $\bx \in \cX$ is the feature vector, and $y \in \cY$.
In \emph{supervised model transfer learning}, we are given two domains: 
a source domain, $\cD_S = \left( \cX_S, \cY_S, P_S \right)$, 
and a target domain, $\cD_T = \left( \cX_T, \cY_T, P_T \right)$.
Given a loss function $\ell : \cY \times \cY \to \reals^+$, 
a source prediction function $f : \cX_S \to \cY_S$ 
and (typically limited) target training set $S^T$ drawn i.i.d.~from $\cD_T$,
our objective is to learn a function $f \in \cF_{T} : \cX_T \to \cY_T$ 
with low risk $R(f) = E_{P_T(\bx,y)}\{ \ell( f(\bx), y) \}$ on the target domain. 

Different transfer learning models have different restrictions on the relationship between the source and the target domains. 
Our work focuses on \emph{inductive transfer learning}, 
a setting in which one assumes that both source and target tasks share the same features and label spaces,
i.e., $\cX_S, \cX_T \subseteq \cX$ and $\cY = \cY_S = \cY_T$. 
Clearly, the marginal distribution of the features may differ between the domains. 
This setting is quite common, both in research literature and in real world applications of transfer learning, 
but is only one of a few existing approaches \cite{pan2010survey}.
The presented framework is suitable for both binary and multi-class classification tasks where $\ell$ is the zero-one loss function. 

\subsection{Random Forests Models:}
Our algorithms are based on standard Random Forests (RFs) \cite{breiman2001random}.
We use the following notations for a tree in the forest:
Each tree node $v$ has an out-degree $d(v)$ and its children are denoted $v_1, \ldots, v_{d(v)}$. 
A leaf node $v$ is associated with a single decision value in $\cY$, denoted $y(v)$. 
An internal (non-leaf) node $v$ is associated with a single feature $\phi(v)$, 
and for a numeric feature it is also associated with a numeric threshold $\tau(v)$. 
Classification of a sample is done based on the leaf that sample reaches, 
i.e., the leaf at the end of the path in the tree the sample will follow, 
and for each node $u$ along this path we say that $\bx$ ``arrives'' at $u$.

\section{Algorithms}
\label{sec:algs}
We now describe our two algorithms, SER and STRUT, for refitting trees to the target domain. 

\subsection{Structure Expansion/Reduction}
\label{sec:SER}

\begin{algorithm}[tb]
   \caption{Structure Expansion Reduction (SER) \label{algo:SER}}
\begin{algorithmic}
   \STATE {\bfseries Input:} Node $v$, labeled samples $S_v$
   \STATE {\bfseries Output:} Node $v$

   \STATE \% \emph{Expand leaves:}
   \IF{$d(v) = 0$} 
   \STATE $v \gets$ Build Tree $\left( S_v \right)$
   \RETURN $v$
   \ENDIF

   \STATE \% \emph{Recurse over child nodes:}
   \FOR{$v_i \in \left\{ v_1, \ldots, v_n \right\}$}
   \STATE Structure Expansion Reduction $\left( v_i, S_{v_i} \right)$
   \ENDFOR

   \STATE \% \emph{Reduce current node:}
   \IF{leafError$\left(v, \left( S_v \right) \right) <$ subtreeError$\left(v, \left( S_v \right) \right)$} 
    \FOR{$i \in d(v)$}
     \STATE deleteNode($v_i$)
    \ENDFOR
    \STATE $d(v) \gets 0$; $y(v) \gets \underset{y}{\arg\max} \left| \left\{ \left( \cdot, y \right) \in S_v \right\} \right|$
   \ENDIF

   \RETURN $v$
\end{algorithmic}
\end{algorithm}

The \emph{structure expansion/reduction} (SER) algorithm pairs two local transformations of a decision tree structure: expansion and reduction. 
In the \emph{expansion} transformation, we specialize rules induced over the source data to the target data.
In \emph{reduction} we perform the opposite operation, i.e., generalize rules induced over the source data. 

Initially, a random forest is induced using the source data $S^S$.
The SER algorithm begins by calculating for each node $v$ the set $S^T_v$ of all labeled points
in the target data $S^T$ that reaches $v$.
Then, each leaf $v$ is expanded to a full tree with respect to the sample set $S^T_v$.
Finally, for each internal node $v$, the algorithm --- working bottom-up on the tree ---
attempts to perform a structure reduction as follows. 
Two types of errors are defined for node $v$ with respect to $S^T_v$.
The first, the \emph{subtree error}, is the empirical error of the subtree whose root is $v$.
The second, the \emph{leaf error},
is defined to be the empirical error on $v$ if it were to be pruned into a leaf.
If the leaf error is smaller than the subtree error, the subtree is pruned into a leaf node.
The decision value at each leaf of the modified DT is obtained using the target 
(empirical) distribution.

A pseudo-code of this  algorithm is presented in Algorithm~\ref{algo:SER}.
Note that the recursive calls are equivalent to depth-first traversal, with expansion performed whenever a leaf is reached, 
followed by a reduction step upon the completion of all recursive calls to an internal node's children.

\subsubsection{Visual Illustration of the SER Algorithm}

\begin{figure}[t]
 \centering
 \subfloat[width=1\linewidth][\centering single box prior to splitting \label{fig:SER-example-a}]{
  \includegraphics[width=0.3\linewidth]{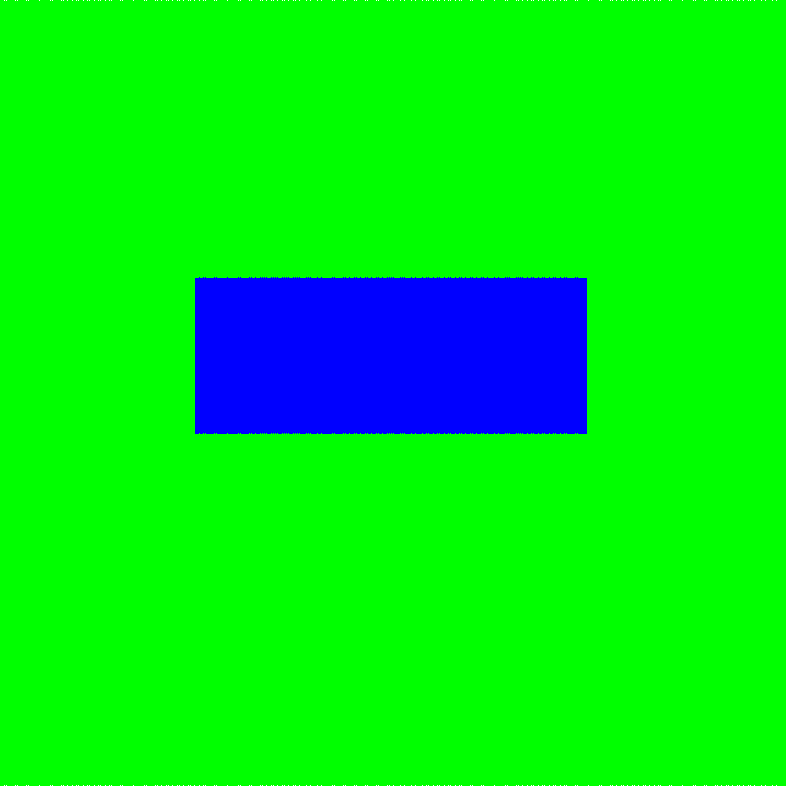}
 }
 \hfill
 \subfloat[width=1\linewidth][\centering boxes after splitting \label{fig:SER-example-b}]{
  \includegraphics[width=0.3\linewidth]{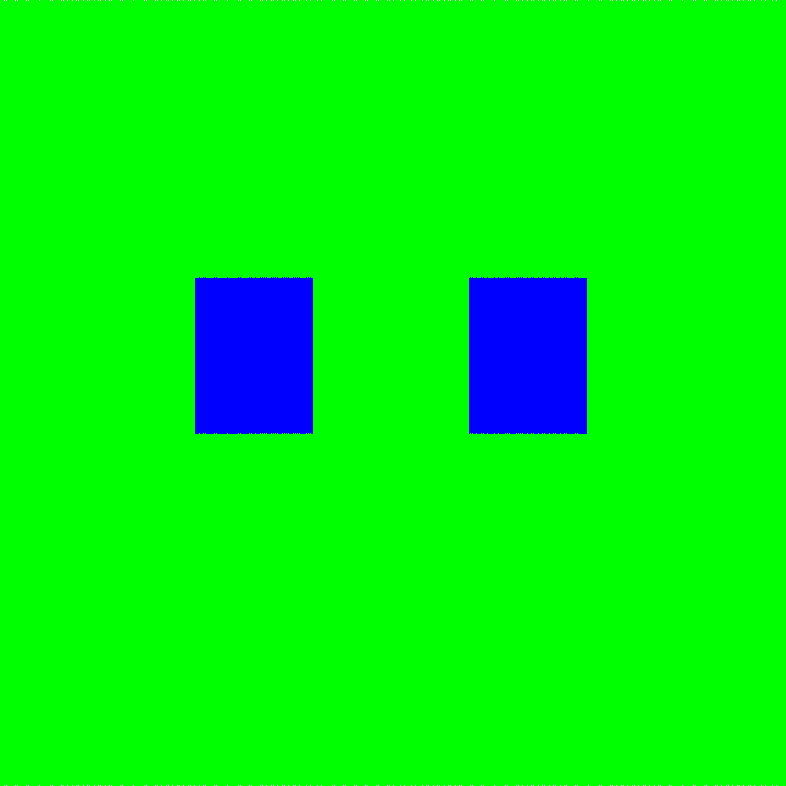}
 }
 \hfill
 \par \medskip
 \subfloat[width=1\linewidth][\centering decision trees matching each domain\label{fig:SER-example-c}]{
  \includegraphics[width=0.9\linewidth]{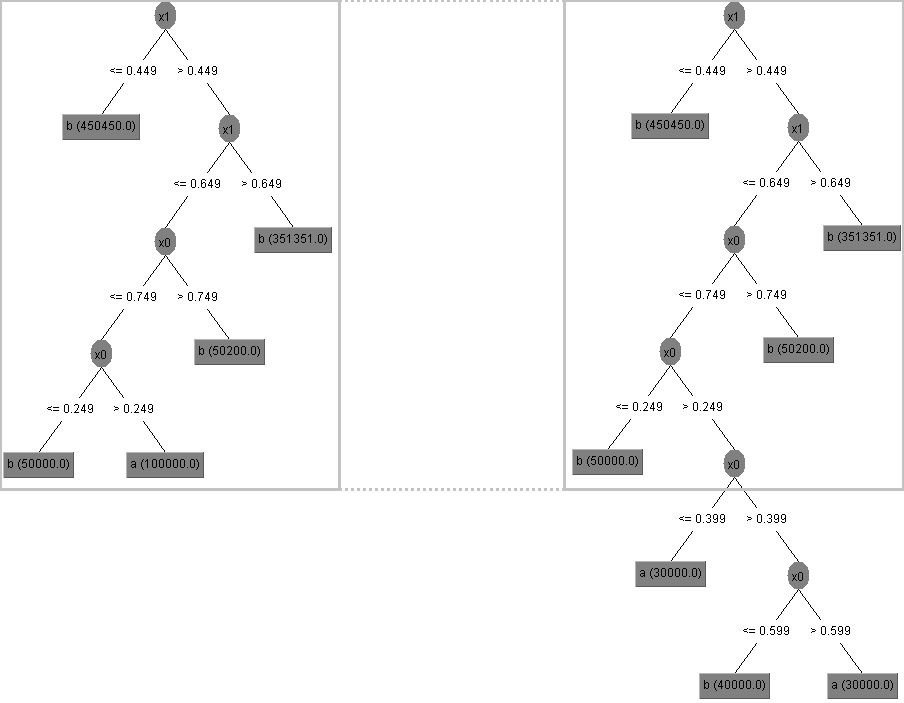}
 }
 \caption{Simple box example and resulting decision trees\label{fig:SER-example}}
\end{figure}

The SER algorithm applies two local transformations on a given decision tree.
To gain some intuition about its operation, we exemplify these transformations.
In Figures~(\ref{fig:SER-example-a},\ref{fig:SER-example-b}) we depict two simple domains (classification problems).
The DTs learned for each domain are illustrated in Figure~(\ref{fig:SER-example-c}).
A standard DT induced over (\ref{fig:SER-example-a}) resulting in the LHS tree in Figure~(\ref{fig:SER-example-c}) can be easily transferred to 
domain (\ref{fig:SER-example-b}) using the expansion operation (applied by SER), resulting in the 
RHS tree of Figure~(\ref{fig:SER-example-c}) .
Similarly, the tree induced for (\ref{fig:SER-example-b}) can be transferred to 
(\ref{fig:SER-example-a}) using the reverse, reduction operation. 

It is obvious that a single classifier can describe two identical domains. 
Therefore, as one domains drifts, the changes can be captured via small modifications to the tree structure. 
The given example demonstrates this simple and intuitive observation, 
showing the high similarity between tree models.
As the concepts drift further apart, iterative SER transformations can capture the new domain 
while maintaining a high degree of correlation between the unchanged similar sections of the domains.

\subsubsection{Logical Regularization in SER}

The SER algorithm is especially designed to first consider expansions and then reductions.
In this section we explain the rationale for this design and argue that it serves as a kind of
regularization that keeps the resulting target model closer to the source model 
than it would if reduction were to precede expansion.

It is well known that a decision tree is equivalent to a disjunctive normal form (DNF) formula 
where a single rule $\tau$, which constitutes a root-to-leaf path, 
is equivalent to a conjunction of literals \cite{breiman1984classification}.
Let $u=u_0, \ldots, u_m$ be a root-to-leaf path in a tree prior to running the SER algorithm, 
with rule $\tau_S$ corresponding to this path.
Let $u'=u'_0, \ldots, u'_n$ be a root-to-leaf path in a tree after running the SER algorithm, 
with rule $\tau_T$ corresponding to this path.
If the path $u'$ was generated from the path $u$ by a SER expansion step, then 
$u_i = u'_i$ for $0 \leq i \leq m$, and we say that rule $\tau_T$ expands rule $\tau_S$.
Similarly, if the path $u'$ was generated from the path $u$ by a SER reduction step, then 
$u_i = u'_i$ for $0 \leq i \leq n$, and we say that rule $\tau_T$ reduces rule $\tau_S$.

Following these definitions, we make two observations on the relations between $\tau_T$ and $\tau_S$. 
\begin{lemma}
\label{lem:SER-exp}
If rule $\tau_T$ expands rule $\tau_S$, then $\tau_T$ satisfies $\tau_S$ 
(i.e., if $\bx \in \cX$ satisfies $\tau_T$ then it also satisfies $\tau_S$). 
\end{lemma}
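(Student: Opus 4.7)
The plan is to unpack the DNF correspondence between root-to-leaf paths and conjunctions of literals, and then observe that the expansion operation produces a path whose conjunction is obtained from the original conjunction simply by \emph{appending} additional literals. Once this is made precise, the implication $\tau_T \Rightarrow \tau_S$ is immediate, which is the logical form of the statement ``$\tau_T$ satisfies $\tau_S$''.

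Concretely, the first step is to recall from \cite{breiman1984classification} (as noted in the text preceding the lemma) that along any root-to-leaf path $v_0, v_1, \ldots, v_k$ in a decision tree, each consecutive pair $(v_{i-1}, v_i)$ contributes exactly one literal $L_i$: namely $\phi(v_{i-1}) < \tau(v_{i-1})$ or $\phi(v_{i-1}) \geq \tau(v_{i-1})$ (or the analogous equality tests for categorical features), depending on which child of $v_{i-1}$ the successor $v_i$ is. The rule associated with the path is then the conjunction $L_1 \wedge L_2 \wedge \cdots \wedge L_k$, which an input $\bx$ satisfies exactly when it traverses the path.

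The second step is to apply this to the setup of the lemma. Let $u = u_0, \ldots, u_m$ be the source path defining $\tau_S$, and let $u' = u'_0, \ldots, u'_n$ be the expanded path defining $\tau_T$. By the definition of an SER expansion step, we have $u_i = u'_i$ for every $0 \leq i \leq m$ (and necessarily $n \geq m$, since expansion grows the path from a leaf $u_m$ by attaching a freshly built subtree). Consequently the first $m$ edges of $u'$ coincide, as oriented edges, with all $m$ edges of $u$, and therefore the first $m$ literals in the conjunction for $\tau_T$ are exactly the literals $L_1, \ldots, L_m$ that define $\tau_S$. Writing the remaining literals as $L_{m+1}, \ldots, L_n$, one obtains
\[
\tau_T \;=\; L_1 \wedge \cdots \wedge L_m \wedge L_{m+1} \wedge \cdots \wedge L_n \;=\; \tau_S \wedge \bigl(L_{m+1} \wedge \cdots \wedge L_n\bigr).
\]
Thus $\tau_T$ is a strengthening of $\tau_S$ by additional conjuncts, and any $\bx \in \cX$ satisfying $\tau_T$ also satisfies $\tau_S$, proving the lemma.

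There is no real technical obstacle here; the argument is essentially bookkeeping. The only place where one has to be careful is distinguishing nodes from edges in the indexing (the path has $m+1$ nodes but $m$ literals), and checking that the case $n = m$, in which $u$ was a leaf that got replaced but the new path stops at $u_m$, is vacuously covered (the added conjunction is empty and $\tau_T = \tau_S$). Other than this minor point, the lemma follows directly from the definitions.
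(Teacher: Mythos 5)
Your proof is correct and follows essentially the same route as the paper's: both identify the literals of $\tau_S$ as a subset of the conjuncts of $\tau_T$ via the shared path prefix $u_i = u'_i$ for $0 \leq i \leq m$, and conclude the implication immediately. Your version is slightly more explicit about the node-versus-literal indexing and the degenerate case $n = m$, but the underlying argument is identical.
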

\begin{proof}
\label{proof:SER-exp}
Let $u=u_0, \ldots, u_n$ be the path corresponding to $\tau_T$ and 
$u'=u'_0, \ldots, u'_m$ the path corresponding to $\tau_S$.
Rule $\tau_T$ is comprised of $n$~literals and rule $\tau_S$ consists of $m$~literals;
each literal corresponds to a single node along a path.
As $\tau_T$ is a conjunction of literals, 
$\bx \in \cX$ satisfies $\tau_T$ if and only if $\bx$ satisfies all of the $n$ terms in $\tau_T$. 
As $u_i = u'_i$ for $0 \leq i \leq m$, the $m$ terms of rule $\tau_S$ are among the $n$ terms in rule $\tau_T$. 
Thus, if $\bx$ satisfies all of the $n$ terms of $\tau_T$, it also satisfies the $m$ terms which appear in both rules, 
and as $\bx$ satisfies all of the $m$ terms in $\tau_S$, $\bx$ satisfies $\tau_S$.
\end{proof}

\begin{lemma}
\label{lem:SER-red}
If rule $\tau_T$ reduces rule $\tau_S$, then $\tau_S$ satisfies $\tau_T$.
\end{lemma}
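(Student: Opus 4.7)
The plan is to mirror the structure of the proof of Lemma~\ref{lem:SER-exp}, keeping the tree/DNF correspondence as the central tool but swapping the direction of the path inclusion. In an expansion step the target path strictly extends the source path, so $\tau_T$ has \emph{more} literals than $\tau_S$ and therefore $\tau_T \Rightarrow \tau_S$. In a reduction step the opposite happens: a subtree rooted at some internal node is pruned back to a leaf, so the root-to-leaf target path is a prefix of the root-to-leaf source path, and $\tau_T$ has \emph{fewer} literals than $\tau_S$. Thus I expect the conclusion $\tau_S \Rightarrow \tau_T$ to fall out of the very same literal-counting argument, run in reverse.

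More concretely, I would first invoke the definition of reduction stated just before the lemma: if $u=u_0,\ldots,u_m$ is the source path and $u'=u'_0,\ldots,u'_n$ is the target path, then $n \le m$ and $u_i=u'_i$ for $0 \le i \le n$. Translating this to rules, each literal of $\tau_T$ corresponds to a node of $u'$, and each such node coincides with the corresponding node of $u$; consequently every literal appearing in the conjunction $\tau_T$ also appears in the conjunction $\tau_S$ (the set of literals of $\tau_T$ is a subset of those of $\tau_S$).

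Next I would carry out the one-line satisfaction argument. Assume $\bx \in \cX$ satisfies $\tau_S$. Since $\tau_S$ is a conjunction of its $m$ literals, $\bx$ satisfies each of them; in particular, $\bx$ satisfies the $n$ literals that also constitute $\tau_T$. Hence $\bx$ satisfies $\tau_T$, which is exactly the claim that $\tau_S$ satisfies $\tau_T$.

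I do not expect any real obstacle here: the lemma is a direct dual of Lemma~\ref{lem:SER-exp}, and the only point requiring care is the bookkeeping about which of $u$ and $u'$ is the prefix of the other after a reduction (the shorter target path is the prefix of the longer source path, the reverse of the expansion case). Once that containment of literal sets is written down correctly, the satisfaction implication is immediate from the fact that a conjunction implies any subconjunction of its conjuncts.
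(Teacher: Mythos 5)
Your proof is correct and takes essentially the same route as the paper, which simply notes that the argument of Lemma~\ref{lem:SER-exp} applies with the roles reversed: the $n$ literals of $\tau_T$ are among the $m$ literals of $\tau_S$, so satisfying the larger conjunction $\tau_S$ forces satisfaction of the subconjunction $\tau_T$. Your added care about which path is a prefix of which is exactly the right bookkeeping and matches the paper's (much terser) proof.
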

\begin{proof}
Similar to Proof~\ref{proof:SER-exp}, where the $n$ terms in rule $\tau_T$ are among the $m$ terms in rule $\tau_S$.
\end{proof}

Following Lemma~\ref{lem:SER-exp} and Lemma~\ref{lem:SER-red}, 
the operation order of expansion followed by reduction has an interesting property:
\begin{corollary}
\label{cor:SER-log}
For any rule $\tau_T$ in the transformed tree, there exists a rule $\tau_S$ in the original tree, 
such that either $\tau_T$ satisfies $\tau_S$ or $\tau_S$ satisfies $\tau_T$.
\end{corollary}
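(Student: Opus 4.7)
The plan is a case analysis on the origin of the leaf $w$ terminating the root-to-leaf path corresponding to $\tau_T$ in the transformed tree. The SER pseudocode performs a depth-first traversal of the \emph{original} tree, so its two operations are cleanly separated: whenever it meets an original leaf $v$ it attaches a freshly built subtree (expansion) and then returns immediately, and on the way back up it may collapse an original internal node into a leaf when the leaf error beats the subtree error (reduction). A key preliminary observation is that, because the leaf branch returns without inspecting the newly grown subtree, reductions are only ever applied at nodes that were internal in the original tree.

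First I would show that every leaf $w$ of the final tree belongs to exactly one of two categories: (a) $w$ is a descendant of some original leaf $v$, produced by the expansion at $v$; or (b) $w$ is an original internal node that was collapsed into a leaf by a reduction. Exhaustiveness is immediate, since any surviving leaf either lies inside an expanded subtree or coincides with a collapsed original internal node. Mutual exclusivity follows because a reduction at any ancestor $u$ of $v$ deletes $v$ together with the whole subtree grown beneath it, so no leaf produced by expansion at $v$ can survive a reduction above $v$.

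Next I would match each case with one of the two lemmas already proved. In case (a), the root-to-$w$ path extends the original root-to-$v$ path, so $\tau_T$ expands the rule $\tau_S$ corresponding to that root-to-$v$ path, and Lemma~\ref{lem:SER-exp} yields that $\tau_T$ satisfies $\tau_S$. In case (b), the root-to-$w$ path is a prefix of any original root-to-leaf path passing through $w$; choosing any such rule $\tau_S$, rule $\tau_T$ reduces $\tau_S$, and Lemma~\ref{lem:SER-red} gives that $\tau_S$ satisfies $\tau_T$. In either case we have produced the original rule required by the corollary.

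The hard part is not any single inequality but the structural bookkeeping in the second paragraph: making precise the claim that the expand-then-reduce schedule never produces a final path that is simultaneously a nontrivial expansion and a nontrivial reduction of conflicting original rules. Once that dichotomy is established, the corollary is a direct invocation of Lemmas~\ref{lem:SER-exp} and~\ref{lem:SER-red}.
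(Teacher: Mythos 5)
Your argument is correct and matches the paper's intent: the paper states the corollary without proof, presenting it as an immediate consequence of Lemmas~\ref{lem:SER-exp} and~\ref{lem:SER-red}, and your case analysis (every final leaf is either a descendant of an expanded original leaf or a collapsed original internal node, with the expand-then-return schedule guaranteeing the dichotomy) is exactly the bookkeeping the paper leaves implicit. No discrepancy to report.
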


The property presented in Corollary~\ref{cor:SER-log} is desirable in our context where we intend to perform local refinements and/or generalizations.
In contrast, this property is violated when applying first reduction and then expansion, 
in which case the resulting model can drift further away from the source model.

\subsection{Structure Transfer}

\begin{algorithm}[t]
 \caption{Structure Transfer (STRUT) \label{algo:STRUT}}
 \begin{algorithmic}
  \STATE {\bfseries Input:} Node $v$, labeled samples $S$
  \STATE {\bfseries Output:} Node $v$

  \STATE \% \emph{Prune unreachable subtree:}
  \IF{($\left| S \right| = 0$)} 
   \FOR{( $i \in d(v)$ )}
    \STATE deleteNode($v_i$)
   \ENDFOR
   \STATE $d(v) \gets 0$
   \RETURN $v$
  \ENDIF

  \STATE \% \emph{Update leaf distribution:}
  \IF {($d(v) = 0$)}
   \STATE{$y(v) \gets \underset{y}{\arg\max} \left| \left\{ \left( \cdot, y \right) \in S \right\} \right|$}
   \STATE{return $v$}
  \ENDIF

  \STATE \% \emph{Refit thresholds for numeric features:}
  \IF{($\phi(v)$ is numeric)}
   \STATE $\tau_1 \leftarrow$ Threshold Selection$\left( S, \phi(v), Q_L(v), Q_R(v) \right)$
   \STATE DG$_1$ = DG $\left( S, \phi(v), \tau_1, Q_L(v), Q_R(v) \right)$
   \STATE $\tau_2 \leftarrow$ Threshold Selection$\left( S, \phi(v), Q_R(v), Q_L(v) \right)$
   \STATE DG$_2$ = DG $\left( S, \phi(v), \tau_2, Q_R(v), Q_L(v) \right)$
   \IF{(DG$_1 \geq$ DG$_2$)}
    \STATE $\tau(v) \leftarrow \tau_1$
   \ELSE
    \STATE $\tau(v) \leftarrow \tau_2$
    \STATE swap$\left( v_1, v_2 \right)$
   \ENDIF
  \ENDIF

  \STATE \% \emph{Run STRUT on sons:}
  \FOR{( $i \in d(v)$ )}
   \STATE STRUT $\left( v_i, S_{v_i} \right)$
  \ENDFOR
  \RETURN $v$
 \end{algorithmic}
\end{algorithm}

The \emph{structure transfer} (STRUT) algorithm is motivated by the observation that 
decision trees for similar problems should exhibit structural similarity. 
Consider, for example, the similar problems of detecting online fraud in two big banks in two different
geo-demographic environments (say one is in the USA and the other is in India). 
Both problems can be modeled such that they share many of the features and their dependencies 
(e.g., the feature `typical customer transaction size' should appear in both models). 
However, the scale of such features and their associated decision thresholds  are likely to differ between problems. 

The STRUT algorithm adapts a DT trained on the source samples to the target samples 
by discarding all numeric threshold values in the tree and working top-down, 
selecting a new threshold $\tau(v)$ for a node $v$ with a numeric feature using $S^T_v$, 
the subset of target examples that reach $v$. 
If the algorithm encounters a node $v$ for which $S^T_v$ is empty, 
$v$ is considered unreachable in the target domain and is pruned.
The final decision value at each leaf is computed on the target training data.

A pseudo-code of the STRUT algorithm is presented in Algorithm~\ref{algo:STRUT}. 
Threshold selection, as computed by the ThresholdSelection procedure, is posed as an optimization problem described bellow.

Any feature $\phi$ and threshold $\tau$ split any set of (labeled) examples, $S$, 
into two subsets, denoted $S_{L, \tau }$ and $S_{R, \tau }$. 
The label distributions over these subsets are denoted $Q_L$ and $Q_R$, respectively.
With respect to each decision node (with a corresponding feature $\phi$), 
STRUT's goal is to optimize a decision threshold with respect to the target training data. 
As an unconstrained optimization is not advisable in cases where the target training set is small, 
we require that the newly optimized threshold for decision node $v$ result 
in label distributions $Q'_L$ and $Q'_R$ that are similar to $Q_L$ and $Q_R$, 
the original distributions obtained when training $v$. 
To this end, we define the \emph{divergence gain} (DG) measure, presented in Equation~(\ref{eq:DG}),
that quantifies the similarity of the resulting distributions obtained for $v$, with respect to training set $S^T_v$, 
to the original distributions, $Q_L$ and $Q_R$. 
DG relies on the (symmetric) Jensen-Shannon divergence given in Equation~(\ref{eq:JSD}), 
where $D_{KL}(\cdot)$ is the familiar Kullback-Leibler divergence and $M$ is the mean distribution, 
$M = \frac{1}{2} \left( P + Q \right)$ \cite{lin1991divergence}.
The choice of the Jensen-Shannon divergence is justified by its frequent use as an effective statistic for the two-sample problem.
\begin{equation}
\label{eq:DG}
\begin{split}
 DG &\left( S^T_v, \phi(v), \tau(v), Q_L, Q_R \right) = \\ 
1 - &\frac{\left| S_{L, \tau } \right|}{\left| S^T_v\right| } JSD(Q'_L, Q_L) - 
\frac{\left| S_{R, \tau } \right|}{\left| S^T_v\right| } JSD(Q'_R, Q_R).
\end{split}
\end{equation}
\begin{equation}
\label{eq:JSD}
2JSD \left( P, Q \right) = 
D_{KL}\left(P || M \right) + D_{KL}\left(Q || M \right).
\end{equation}

To perform threshold selection for feature $\phi$, STRUT uses DG to quantify distributional similarity 
and the information gain (IG) criterion to measure a threshold's informative value \cite{hunt1966experiments}. 
For feature $\phi$, STRUT looks for a threshold yielding high similarity between the induced distributions and the original distributions calculated during the tree induction stage. 
This similarity is restricted to ``informative" thresholds where, for any sufficiently small $\epsilon > 0$, 
the IG of threshold $x$ is larger than the IG of any other $x'$ in the $\epsilon$-neighborhood of $x$, i.e., 
thresholds that are local maximums of IG. 
Thus, STRUT's threshold selection can be formulated as an optimization problem~(\ref{eq:opt-problem}).
\begin{equation}
\label{eq:opt-problem}
\begin{array}{ll}
\underset{x}{\text{Maximize}} & DG\left(S_v^T, \phi, x, Q_L, Q_R \right) \\
\text{s.t.} & x \in \reals \\
 & \forall x' \in (x-\epsilon,x+\epsilon): \\
 & \qquad IG\left(S_{v}^{T}, \phi, x\right) \geq IG\left(S_{v}^{T}, \phi, x' \right).
\end{array}
\end{equation}

Problem (\ref{eq:opt-problem}) is not convex and we solve it using a line search on a limited number of possible thresholds. 
We note that the space/time complexities incurred by this optimization 
are very similar to the space/time complexities incurred when maximizing the IG value during standard tree induction.
Note also that in order to calculate the DG value we require node $v$ to retain the distributions 
$Q_L$ and $Q_R$ that were computed during construction.

It turns out that in some transfer learning problems features not only change threshold values but, 
as the concept drifts, they may also change their meaning. 
For example, in a fraud detection problem the ``average transaction size" feature occasionally changes meaning as attackers change strategies to fool the detector. 
The STRUT algorithm easily accommodates such changes between the source and the target by solving the optimization problem a second time but with the original distributions, $Q_L$ and $Q_R$, reversed.
If this switch improves the DG value, STRUT swaps the nodes' subtrees in conjunction with optimizing the threshold.

\subsubsection{On the Use of IG and DG}

\begin{figure}[t]
 \centering
 \subfloat[width=1\linewidth][\centering IG value for thresholds\label{fig:STRUT-example-a}]{
  \fbox{\includegraphics[width=0.7\linewidth]{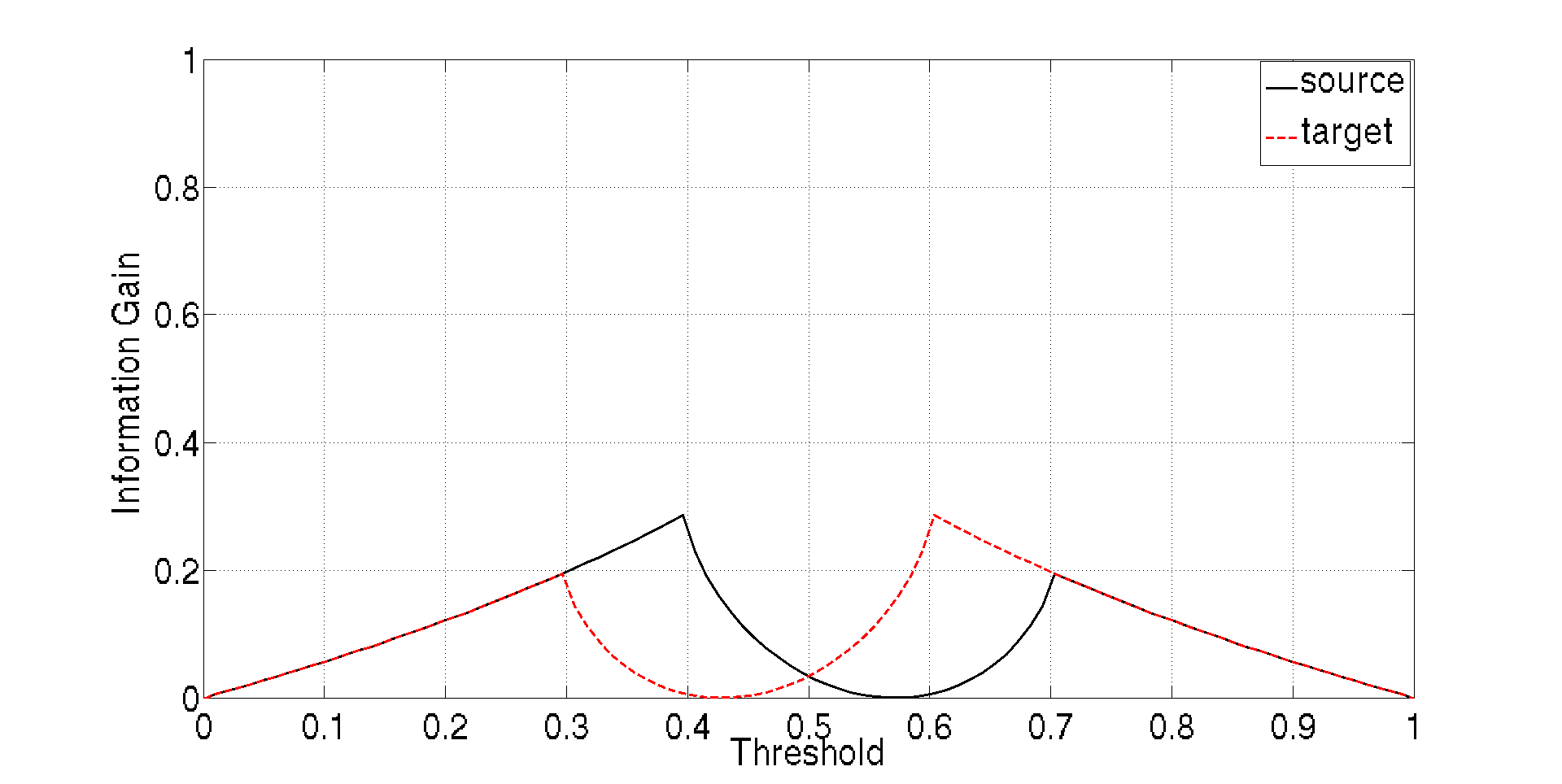}}
 }

 \subfloat[width=1\linewidth][\centering decision tree for source domain\label{fig:STRUT-example-b}]{
  \includegraphics[width=0.5\linewidth]{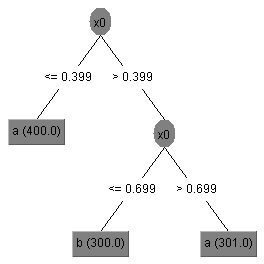}
 }
 \caption{Simple concept shift example \label{fig:STRUT-example}}
\end{figure}

As discussed above, the STRUT algorithm relies on both the DG and IG measures to optimize the adapted thresholds. 
Here we explain the motivation for using this combination of measures.
The IG is effective in quantifying the ``informativeness" of a threshold. 
However, IG is oblivious to dependencies enforced by the given structure.
In contrast, DG is a global regularization measure that does not account for local gains. 
The following examples show that each of these measures on its own fails to select an appropriate threshold.

Consider first the application of IG.
Define two simple domains where the feature space, $\cX$, is the range $\left[0 , 1\right]$,
and our label space is  $\cY = \left\{\pm 1\right\}$.
The source and target distributions, $P_s$ and $P_t$, are taken to be
$$
P_s\left(x\right) = \left\{ \begin{array}{rc}
1 & 0.4<x<0.7\\
-1 & otherwise;
\end{array}\right.
$$
$$
P_t\left(x\right) = \left\{ \begin{array}{rc}
1 & 0.3<x<0.6\\
-1 & otherwise.
\end{array}\right.
$$

The induced tree for the source domain is given in Figure~\ref{fig:STRUT-example-b} 
along with a plot of the IG values for different thresholds in Figure~\ref{fig:STRUT-example-a}.
Using a restricted variant of the STRUT algorithm on this problem, applied only with the IG measure, will result in 
a decision stump with a large error rate of $\sim 30\%$. 
The reason is that the root threshold is set by the algorithm to 0.6 and the left tree, 
which is a leaf, will just update the returned decision, while the right child will be pruned, 
because the samples that arrive at this node will all belong to a single label.

Next we show a simple concept shift problem where DG over-regularizes unless it is mitigated by local considerations.
We keep the same feature space as in the previous example (the range $\left[0 , 1\right]$) 
as well as the same label space ($\cY = \left\{\pm 1\right\}$).
However, now the source and target distributions, $P_s$ and $P_t$, are 
$$
P_s\left(x\right) = \left\{ \begin{array}{rc}
 1 & 0 \leq x < 0.5\\
-1 & 0.5 \leq x < 0.75\\
 1 & otherwise ;
\end{array}\right.
$$
$$
P_t\left(x\right) = \left\{ \begin{array}{rc}
 1 & 0 \leq x < 0.6\\
-1 & 0.6 \leq x < 0.85\\
 1 & otherwise .
\end{array}\right.
$$

\begin{figure}[t]
 \centering
 \includegraphics[width=.5\linewidth]{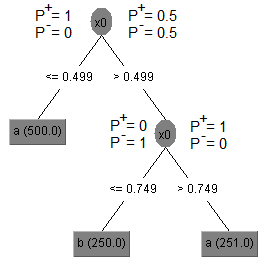}
  \caption{Induced decision tree with distributions \label{fig:STRUT-example2}}
\end{figure}

The induced tree for the source domain, as well as the induced distributions in each node $v$, 
are given in Figure~\ref{fig:STRUT-example2}.
Using a variant of the STRUT algorithm now restricted to apply only the DG measure will result in a tree whose error rate is $\sim 10\%$. 
The reason is that the root threshold is set by the algorithm to 0.5. 
The left tree is a leaf, which will result in updating the returned decision of the leaf (i.e., no change actually occurs).
However, for the right child, which is a stump, we are faced with a problem consisting of three distinct regions:
$$
\begin{array}{rc}
 1 & 0.5 \leq x < 0.6\\
-1 & 0.6 \leq x < 0.85\\
 1 & otherwise
\end{array}.
$$
If STRUT were to use the DG measure on its own, it would choose the threshold with the maximum DG value, which is $0.85$, 
and as the node's children, which are all leaves, it would simply update the returned decision (i.e., still no change occurs).
In this case, it is easy to see that the new tree misclassifies the range $0.5 \leq x < 0.6$.

\begin{table*}[t]
 \centering
 \caption{Test results of transfer forests on synthetic challenges}
 \subfloat[Two simple source-target transformations]{
  \label{tab:examples-synth-figs}
  \centering
  \begin{tabular}{r|c|c|ccc}
    & \multicolumn{1}{c}{$\cD_S$}  & \multicolumn{1}{c}{$\cD_T$}  & \multicolumn{1}{|c}{\bf STRUT}  & \multicolumn{1}{c}{\bf SER}  & \multicolumn{1}{c}{\bf MIX} \\
   \hline
   moving source & 
   \fbox{\includegraphics[width=0.06\textwidth]{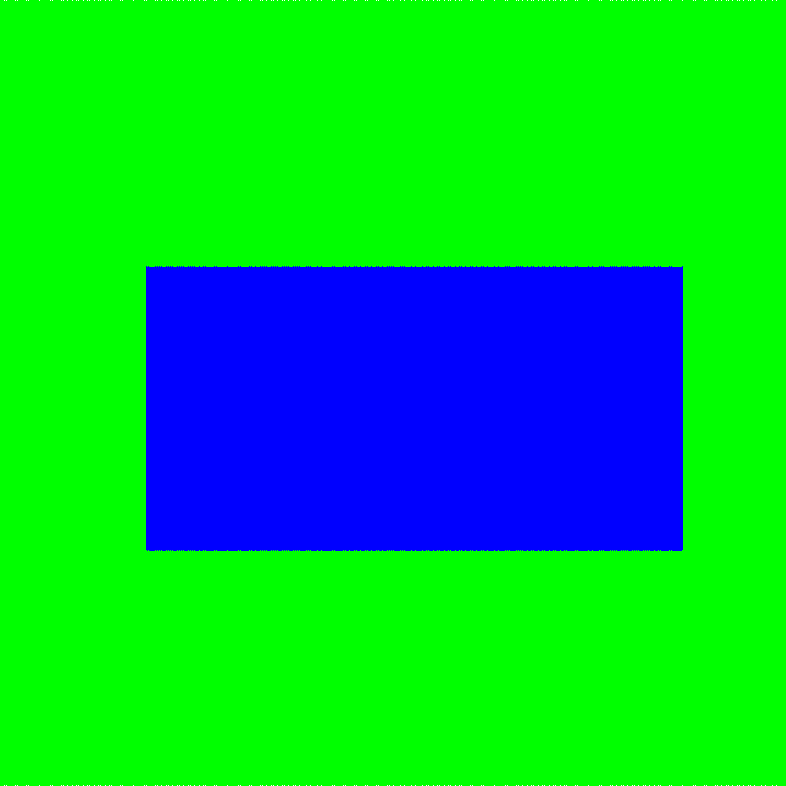}} & 
   \fbox{\includegraphics[width=0.06\textwidth]{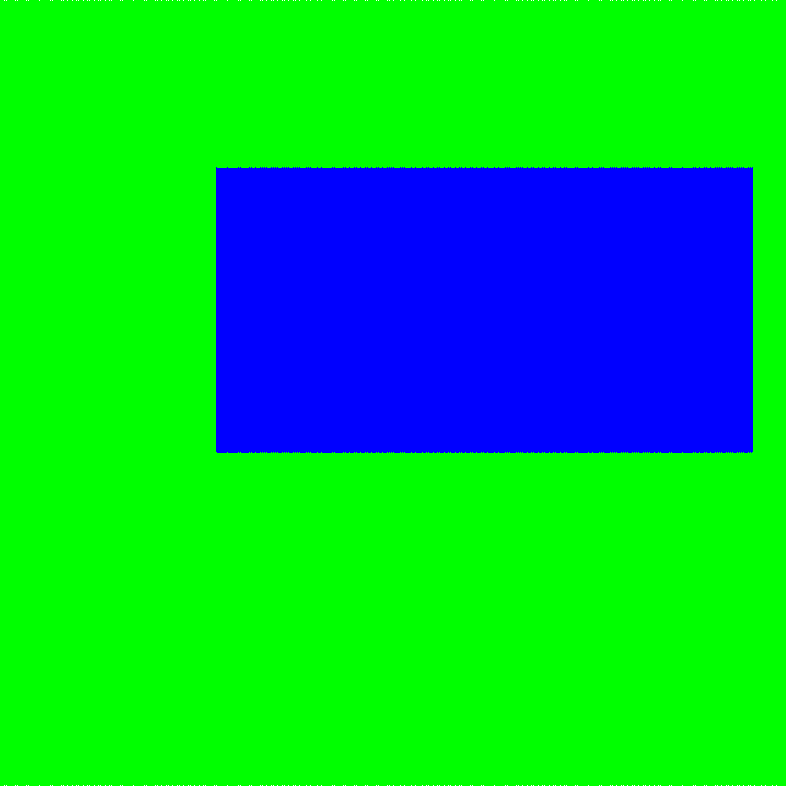}} & 
   \fbox{\includegraphics[width=0.06\textwidth]{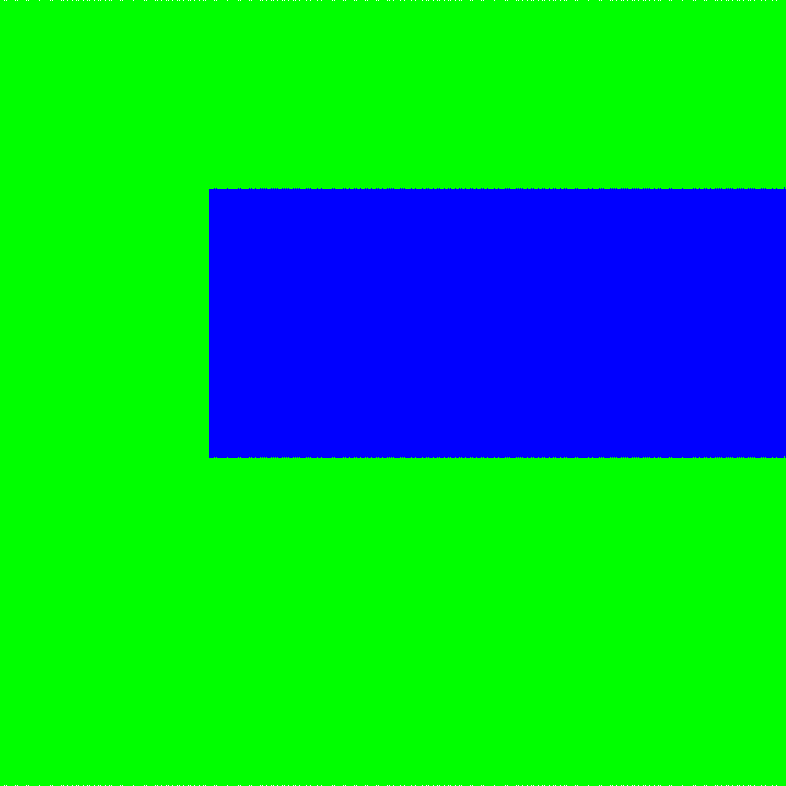}} & 
   \fbox{\includegraphics[width=0.06\textwidth]{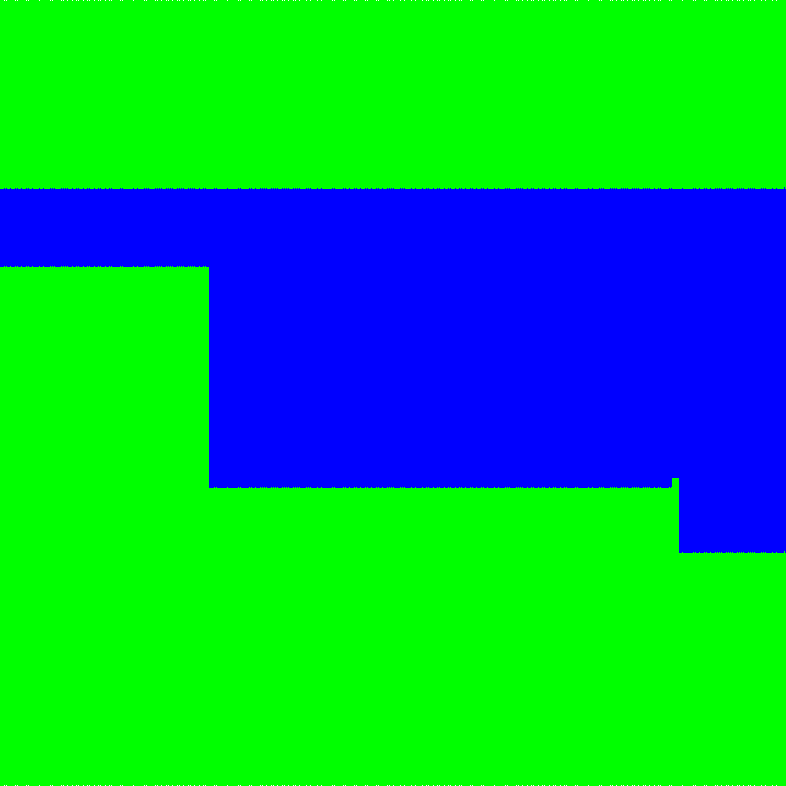}} & 
   \fbox{\includegraphics[width=0.06\textwidth]{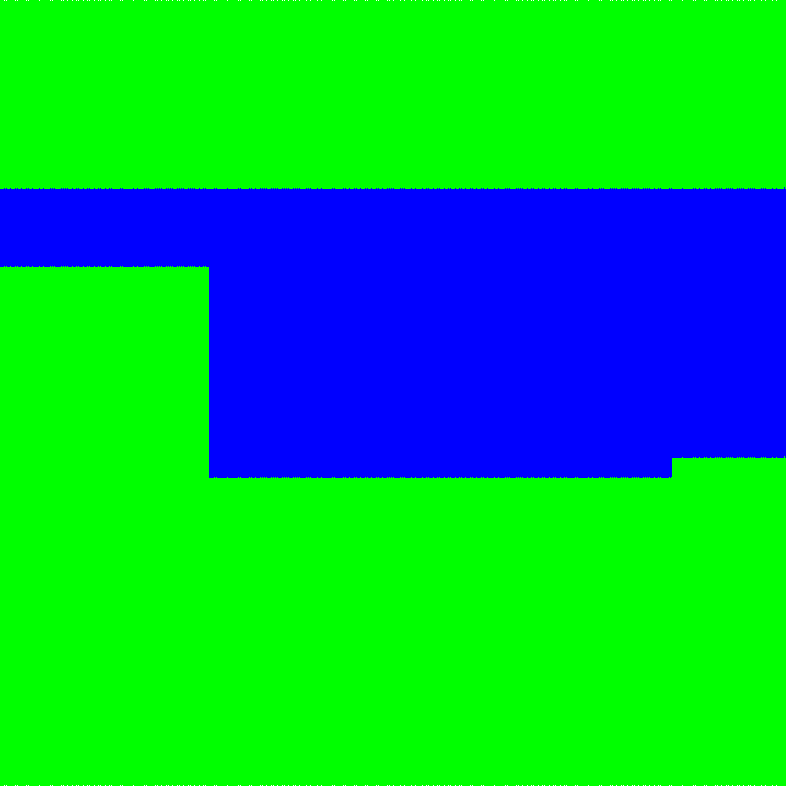}} \\
   \hline 
   mixed boxes & 
   \fbox{\includegraphics[width=0.06\textwidth]{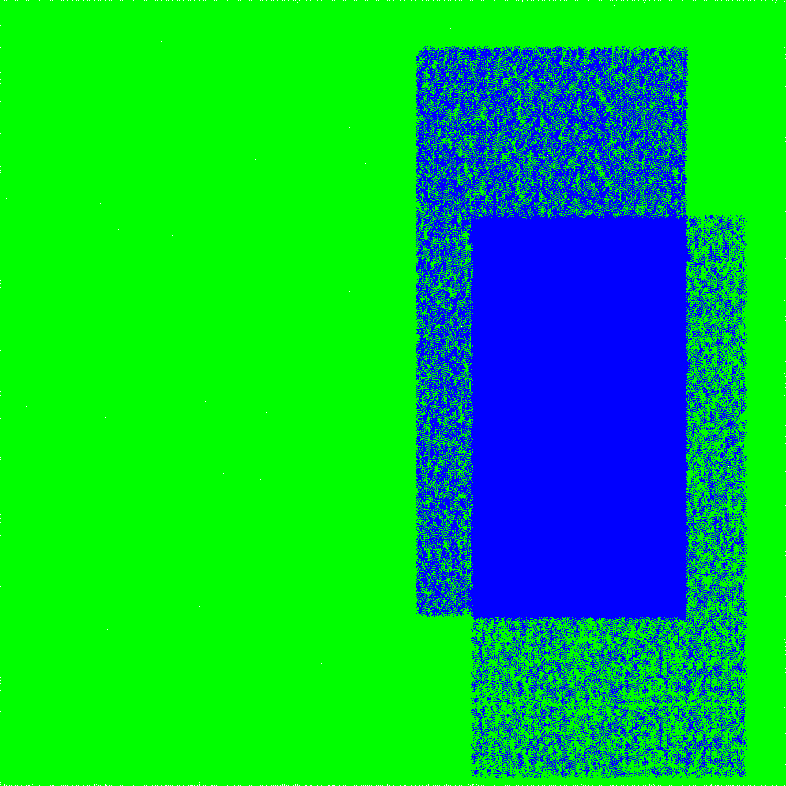}} & 
   \fbox{\includegraphics[width=0.06\textwidth]{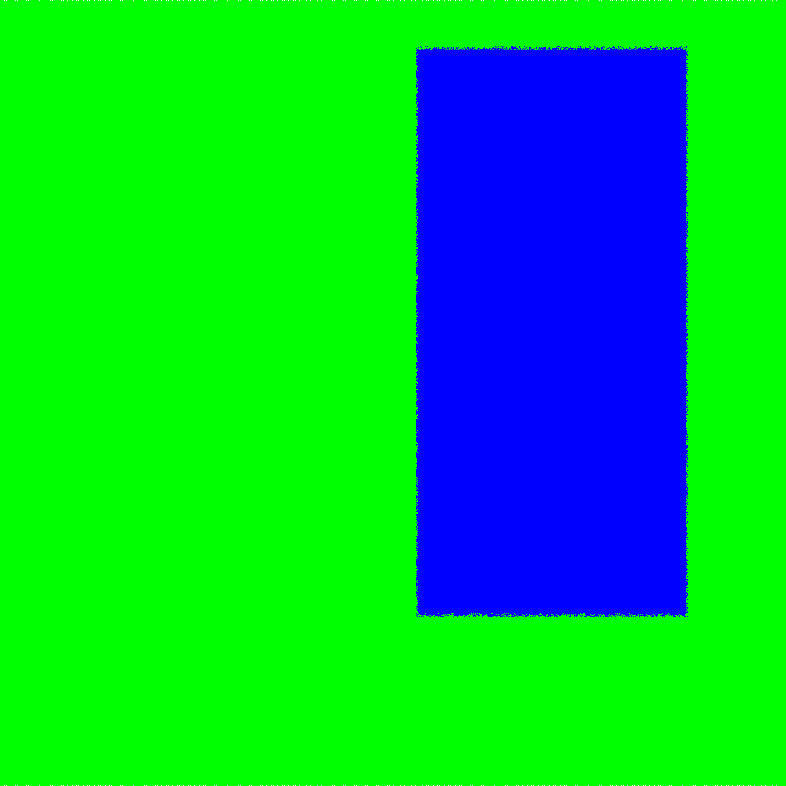}} & 
   \fbox{\includegraphics[width=0.06\textwidth]{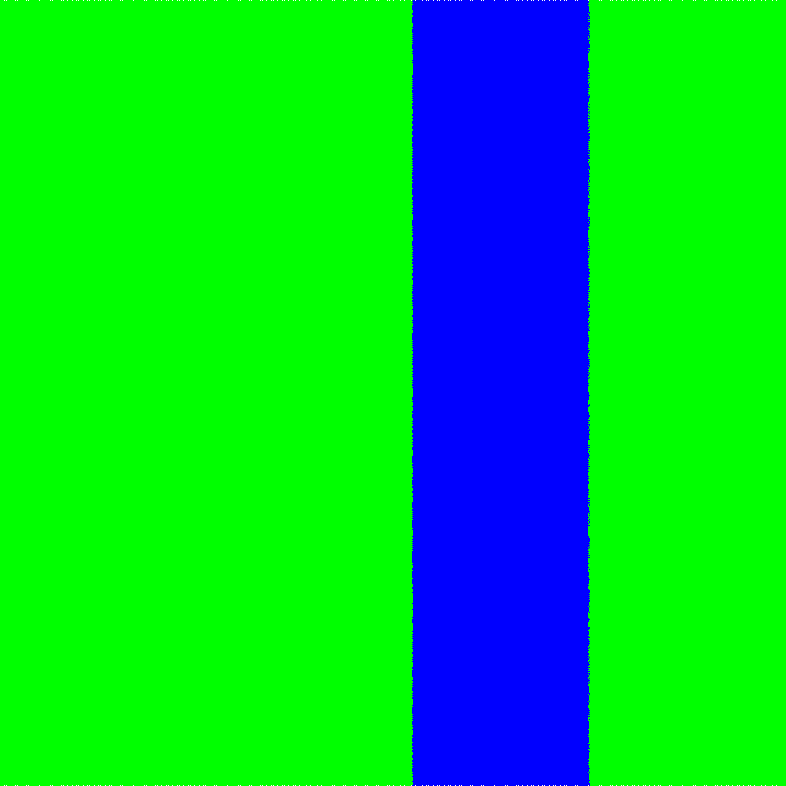}} & 
   \fbox{\includegraphics[width=0.06\textwidth]{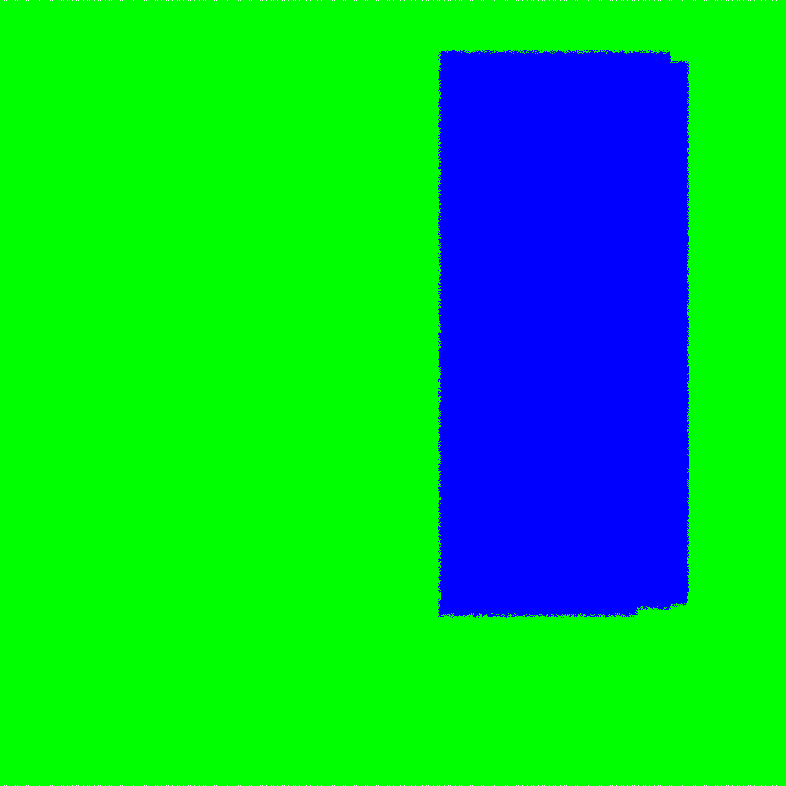}} & 
   \fbox{\includegraphics[width=0.06\textwidth]{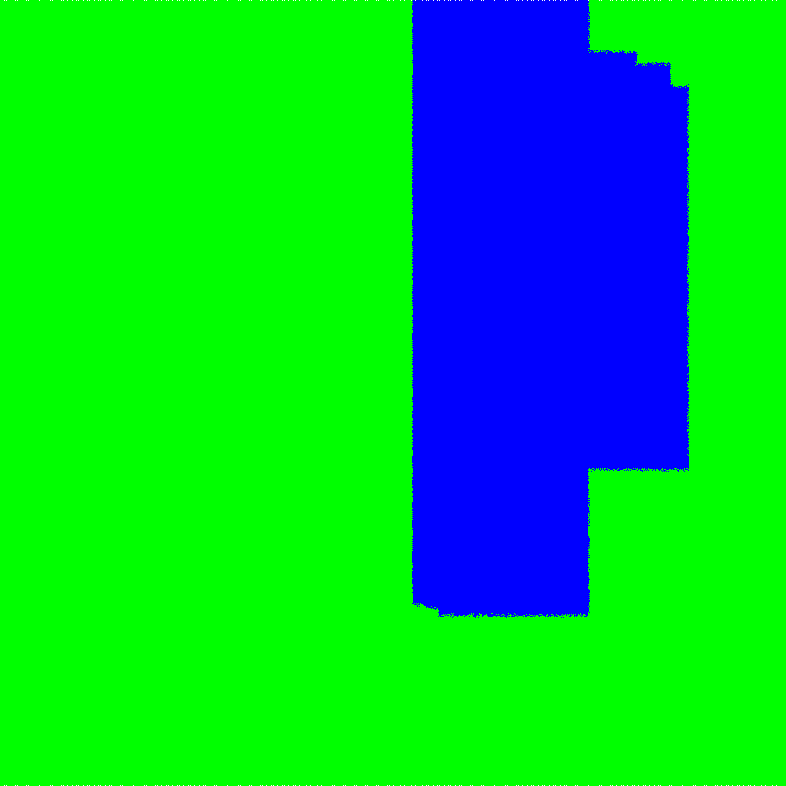}} \\
  \end{tabular}
 }
 \quad{}
 \subfloat[Test error rates - boldface marks lowest error]{
  \centering
  \begin{tabular}{l|ccc}
    & \multicolumn{1}{c}{\bf STRUT}  & \multicolumn{1}{c}{\bf SER}  & \multicolumn{1}{c}{\bf MIX} \\
  \hline 
  moving source & $\mathbf{6.1}$ & $12.8$ & $6.4$ \\
  \hline 
  mixed boxes & 7.7 & $6.6$ & $\mathbf{5.5}$ \\
  \end{tabular}
  \label{tab:experiments-synth-err}
 }
\end{table*}

It is not hard to see that in both the above negative examples (for using IG or DG on their own), 
the transformed trees can achieve $100\%$ accuracy if both the IG and DG measures are used in conjunction, 
as prescribed by the (unrestricted) STRUT algorithm.

\subsection{A MIX Approach}

Our proposed solution is to generate two forests using both SER and STRUT and then define MIX as a voting ensemble 
whose underlying model is the union of all the trees in these forests.
Thus, MIX is a simple majority vote applied over all decision trees transferred by either SER or STRUT.
As can be seen below, the resulting MIX ensemble often outperforms both of its constituents. 
An intuitive explanation for its excellent performance appears in Section~\ref{sec:explain-mix}.

\subsection{Numerical Examples and Intuition}
\label{sec:examples}

To gain intuition about the relative strengths and weaknesses of the SER and STRUT algorithms, 
as well as the  MIX solution, we consider a number of small synthetic transfer challenges,
each representing a controlled transformation between the source and target domains. 
We present two of these challenges. 
Eight additional synthetic examples are available in Appendix~\ref{sec:Synthetic}.

Each synthetic example consists of $1,000$ independent trials. 
The ``moving source'' transformation demonstrates a source concept that is shifted in the target domain, 
but retains its geometry between domains. 
By design, we expect STRUT to excel in this case.
In the ``mixed boxes" transformation, the source concept is composed of 
a 50-50 mixture of slightly shifted boxes and the target concept consists of one of these boxes. 
This problem models a case where the target concept is a kind of refinement of the source concept. 
One can expect SER to excel in this problem.

In Table~\ref{tab:examples-synth-figs} we depict the concepts learned by STRUT, SER and MIX for the two transformations.
The corresponding test errors are presented in Table~\ref{tab:experiments-synth-err}. 
Indeed, in these simple cases the algorithms perform as expected.
The performance of MIX in these examples is clearly not the average performance of SER and STRUT;
in the `moving source' example MIX is a close runner up to the best algorithm, and 
in `mixed boxes' it is better than both STRUT and SER.

From the additional synthetic examples available in Appendix~\ref{sec:Synthetic}, 
we can see that SER obviously outperforms STRUT in cases where feature correspondence is not maintained between source and target, 
such as OCR and domains of pixel based images.
However, STRUT can easily outperform SER when feature correspondence is maintained, 
such as in the case of the inversion problem.

Another observation from the examples in Appendix~\ref{sec:Synthetic} is that MIX can outperform its constituents or at worst be a close second.
Furthermore, when MIX is only the second best algorithm, 
its error rates are not simply the average of both base algorithms but are much closer to the best algorithm. 
While MIX is a simple ensemble of different algorithms, 
it is capable of providing the desired beneficial results.
Further discussion on this behavior and its causes are found in Section~\ref{sec:explain-mix}.

\section{Empirical Evaluation}
\label{sec:experiments}

We evaluated the SER, STRUT and MIX transfer learning algorithms over a number of challenges, 
first comparing our results with non-transfer learning techniques and trivial tree-based transfer learning baselines 
and finally competing against other model transfer algorithms.

We used the \emph{SrcOnly} baseline as our first benchmark. 
Because it represents a trivial approach to transfer learning that utilizes no target data, 
the model was trained using only the source data.
Our second benchmark was the \emph{TgtOnly} baseline, 
where we create the target model using target only data.
In general, any useful transfer learning method should surpass the SrcOnly baseline. 
The TgtOnly benchmark is traditionally viewed as a \emph{skyline}, representing the best possible performance. 
However, effective transfer learning methods can sometimes surpass the skyline due to clever exploitation of both 
source and target examples, thus enjoying a larger training sample than that allotted to TgtOnly. 

In addition, we compared performance to trivial tree-based model transfer baselines on the same experimental setup.
The \emph{relabeling} classifier simply 
updates the leaves of a forest trained on the source examples using the target samples. 
In the \emph{bias} approach, the weights in the original forest are changed from a
uniform distribution to one which favors trees with lower error rates on the available target training samples.
\emph{Pruning} stands for using the target samples to perform pruning on the original forest,
just like the reduction step in our SER algorithm or the pruning technique of the C4.5 algorithm \cite{quinlan1993c4}.

Finally, we compared performance to two well-known model transfer algorithms. 
The first is \emph{Adaptive SVM} (ASVM) \cite{yang2007cross}, 
which uses target examples to regularize an SVM model with a Gaussian kernel, trained using source examples only.
While ASVM has several extensions, 
these usually rely on a large set of unlabeled target training data, 
without which these techniques are similar to ASVM \cite{duan2009domain, duan2012domain}.
The second algorithm is \emph{consensus regularization} \cite{luo2008transfer}, 
which attempts to decrease the classification error by minimizing an entropy based disagreement measure 
among a set of source-only and target-only models. 
While the original paper applied the algorithm with underlying logistic regression models,
we used random forests, which outperformed the logistic regression application.

\subsection{Datasets}

\begin{table}[t]
\centering
\caption{Dataset information \label{tab:datasets}}
\begin{tabular}{l|c|c|c}
\multicolumn{1}{c|}{dataset} & \multicolumn{1}{c|}{DIM$\left( \cX \right)$} & 
\multicolumn{1}{c|}{DIM$\left( \cY \right)$} & \multicolumn{1}{c}{$\left| S^S \right|$}\\
\hline 
mushroom & 22 &  2 & $4,608$ \\
letter & 16 & 26 & $10,822$ \\
wine & 11 & 11 & $4,898$ \\
digits & 64 & 2 & $5,620$ \\
MNIST & 784 & 10 & $2,000$ \\
USPS & 784 &10 & $10,000$ \\
landmines & 9 & 2 & $8,535$ \\
amazon-webcam & 800 & 10 & $1,123$ \\
caltech-webcam  & 800 & 10 & $958$ \\
\hline
activity-user1 & 35 & 5 & $23,690$ \\
activity-user2 & 35 & 5 & $40,405$ \\
activity-user3 & 35 & 5 & $36,111$ \\
activity-user4 & 35 & 5 & $25,171$ \\
activity-user5 & 35 & 5 & $24,920$ \\
activity-user6 & 35 & 5 & $24,481$ \\
\end{tabular}
\end{table}

The effectiveness of transfer learning techniques is of course expected to depend on the degree of relatedness between $\cD_S$ and $\cD_T$.
We generated the source and target sets based on either meaningful splits of existing datasets, 
or on a transformation of a subset of the dataset. 
These approaches are common practice in dataset construction for validating transfer models \cite{dai2007boosting}.
We used the following data sets, whose properties are presented in Table~\ref{tab:datasets}.

{\bf Mushroom}: This is a publicly available dataset from the UCI Repository \cite{Bache+Lichman:2013}.
It contains samples of edible and poisonous mushrooms, 
and the value of the \emph{stalk-shape} binary feature is used to partition the dataset into two.
This technique was used by Dai et al. \cite{dai2007boosting}, and the resulting partition is such that the source mushrooms belong to different species than the target mushrooms.

{\bf Letter}: Also from the UCI Repository, the letter recognition dataset is partitioned according to the numeric feature \emph{x2bar}, 
by thresholding on its median for each letter. 
This results in source and target distributions of different fonts.

{\bf Wine}: Publicly available wine quality dataset \cite{cortez2009modeling}, already partitioned into two; 
the source domain consists of white wines and the target domain consists of red wines.

{\bf Digits}\footnote{\url{ http://tx.technion.ac.il/~omerlevy/datasets/}}: 
The Digits dataset consists of images of handwritten digits.
We considered the two binary problems of identifying `6' and `9', each of which is a $180^O$ rotation of the other.

{\bf Inversion}: The task concerns hand-written digit recognition from the MNIST digit database \cite{mnistlecun}.
Source and target domains are generated from the MNIST database as follows.
For the source domain we take 200 images of each digit, sampled uniformly at random, and invert the color of each image. 
The target data consists of images sampled at random without inversion. 
Test data was taken from the MNIST database.
Examples of inverted digits are shown in Table~\ref{tab:mnist-examples}.

{\bf Higher Resolution}: This challenge reflects a scenario where we have a lot of source data, 
from a low-resolution camera, and a small amount of target data, obtained with a high-resolution camera. 
The source examples are generated by an averaging process that creates lower resolution images, 
whereby the source image is partitioned into small `super-pixels', each consisting of a disjoint $4 \times 4$ squares of pixels. 
The intensity of each super-pixel element is averaged.
Test data was taken from the MNIST database.
Examples of low resolution digits are shown in Table~\ref{tab:mnist-examples}.

\begin{table}[t]
 \centering
 \caption{Low resolution and inverted versions of digits from the MNIST dataset}
 \label{tab:mnist-examples}
 \begin{tabular}{r|c|c|c|}
  & \multicolumn{3}{c}{\bf DIGIT} \\
  \cline{2-4}
   & 1 & 4 & 7 \\
  \hline
  \hline
  original image & \fbox{\includegraphics[width=0.03\textwidth]{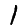}} & 
  \fbox{\includegraphics[width=0.03\textwidth]{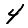}} &
  \fbox{\includegraphics[width=0.03\textwidth]{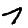}} \\
  \hline
  inverted image & \fbox{\includegraphics[width=0.03\textwidth]{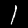}} & 
  \fbox{\includegraphics[width=0.03\textwidth]{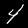}} & 
  \fbox{\includegraphics[width=0.03\textwidth]{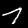}} \\
  \hline
  low resolution & \fbox{\includegraphics[width=0.03\textwidth]{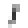}} & 
  \fbox{\includegraphics[width=0.03\textwidth]{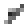}} & 
  \fbox{\includegraphics[width=0.03\textwidth]{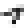}}  \\
 \end{tabular}
\end{table}

{\bf USPS}: Another example of hand-written digit recognition collected under different conditions \cite{hull1994database}. 
The USPS dataset was collected from scans of random letters in a US post office.
We generate the domains using the same transformation as that used in the MNIST database, 
i.e., images are enlarged to 20X20 pixels and placed in a 28X28 image, centered on the center of mass. 
For this transfer experiment we treat MNIST as the source domain and utilize the MNIST training set as source data.

\begin{table*}[t]
\centering
\caption{Test error rates compared to benchmarks and competing algorithms --- lowest error in boldface \label{tab:err-forests-baselines}}
\scalebox{0.85}{
\begin{tabular}{l|ll|lll|ll|lll}
  \multicolumn{1}{c}{\bf DATASET} & \multicolumn{1}{c}{\bf SrcOnly} & \multicolumn{1}{c}{\bf TgtOnly} & \multicolumn{1}{c}{\bf relabeling} & \multicolumn{1}{c}{\bf bias} & \multicolumn{1}{c}{\bf pruning} & \multicolumn{1}{c}{\bf ASVM} & \multicolumn{1}{c}{\bf consensus} & \multicolumn{1}{c}{\bf STRUT} & \multicolumn{1}{c}{\bf SER} & \multicolumn{1}{c}{\bf MIX}\\
  \hline
  mushroom & $15.2\pm0.3$ & $0.5\pm0.1$ & $2.1\pm0.2$ & $12.6\pm0.5$ & $14.1\pm0.6$ & $2.3\pm0.2$ & $0.6\pm0.1$ & $1.9\pm0.2$ & $\mathbf{0.4\pm0.07}$ & $0.5\pm0.08$ \\
  letter & $66.5\pm0.4$ & $19.3\pm0.2$ & $20.7\pm0.3$ & $63.4\pm0.6$ & $22.5\pm0.4$ & $33.8\pm0.2$ & $24.1\pm0.4$ & $21.0\pm0.4$ & $18.9\pm0.2$ & $\mathbf{16.7\pm0.2}$ \\
  wine & $66.6\pm0.6$ & $45.5\pm0.3$ & $44.9\pm0.2$ & $55.3\pm0.2$ & $\mathbf{44.6\pm0.2}$ & $54.7\pm0.4$ & $44.3\pm0.2$ & $46.6\pm0.3$ & $45.8\pm0.2$ & $45.0\pm0.3$ \\
  digits & $19.9\pm0.05$ & $3.0\pm0.2$ & $10.0\pm0.0$ & $19.9\pm0.01$ & $10.0\pm0.0$ & $10.0\pm0.0$ & $14.2\pm0.9$ & $5.4\pm0.3$ & $\mathbf{2.9\pm0.2}$ & $3.8\pm0.3$ \\
  USPS & $13.5\pm0.0$ & $14.9\pm0.2$ & $13.7\pm0.1$ & $13.7\pm0.0$ & $13.3\pm0.1$ & $78.2\pm3.5$ & $\mathbf{11.6\pm0.1}$ & $15.6\pm0.2$ & $13.5\pm0.1$ & $13.3\pm0.1$ \\
  landmines & $52.4\pm0.1$ & $41.0\pm0.7$ & $39.2\pm0.5$ & $52.1\pm0.1$ & $\mathbf{38.2\pm0.2}$ & $43.4\pm0.7$ & $41.6\pm0.7$ & $40.4\pm0.6$ & $40.7\pm0.4$ & $40.4\pm0.5$ \\
  \hline
  amazon-webcam & $62.2\pm0.1$ & $74.6\pm0.8$ & $66.7\pm0.7$ & $65.4\pm0.4$ & $64.6\pm0.7$ & $88.0\pm0.5$ & $67.4\pm0.7$ & $71.3\pm0.6$ & $\mathbf{62.0\pm0.6}$ & $64.6\pm0.7$ \\
  caltech-webcam & $65.1\pm0.4$ & $74.6\pm0.6$ & $66.6\pm0.4$ & $67.6\pm0.4$ & $88.2\pm0.3$ & $68.6\pm0.5$ & $64.3\pm0.4$ & $71.3\pm0.4$ & $\mathbf{63.3\pm0.4}$ & $64.6\pm0.5$ \\
  \hline
  inversion(1\%) & $98.7\pm0.2$ & $54.2\pm0.1$ & $54.8\pm5.5$ & $96.2\pm3.8$ & $57.3\pm5.8$ & $92.4\pm0.3$ & $76.0\pm0.3$ & $\mathbf{41.8\pm0.5}$ & $58.5\pm0.3$ & $44.1\pm0.3$ \\
  inversion(5\%) & $98.7\pm0.2$ & $28.2\pm0.2$ & $36.4\pm3.7$ & $96.9\pm3.1$ & $39.5\pm4.0$ & $92.4\pm0.2$ & $44.8\pm0.3$ & $\mathbf{21.2\pm0.3}$ & $36.5\pm0.2$ & $22.2\pm0.1$ \\
  inversion(10\%) & $98.7\pm0.2$ & $20.5\pm0.1$ & $30.7\pm3.1$ & $97.0\pm3.0$ & $34.3\pm3.5$ & $92.4\pm0.2$ & $32.7\pm0.2$ & $\mathbf{15.9\pm0.4}$ & $24.1\pm0.3$ & $16.4\pm0.2$ \\
  \hline
  high-res(1\%) & $32.7\pm0.2$ & $54.2\pm0.1$ & $44.6\pm4.5$ & $\mathbf{33.7\pm3.4}$ & $42.4\pm4.3$ & $90.2\pm0.3$ & $35.7\pm0.5$ & $48.2\pm0.5$ & $37.2\pm0.3$ & $38.5\pm0.3$ \\
  high-res(5\%) & $32.7\pm0.2$ & $28.2\pm0.2$ & $24.8\pm2.5$ & $32.5\pm3.3$ & $24.4\pm2.5$ & $90.2\pm0.2$ & $23.7\pm0.3$ & $28.3\pm0.3$ & $22.5\pm0.2$ & $\mathbf{21.8\pm0.1}$ \\
  high-res(10\%) & $32.7\pm0.2$ & $20.5\pm0.1$ & $19.4\pm2.0$ & $32.3\pm3.3$ & $20.3\pm2.0$ & $90.2\pm0.2$ & $19.4\pm0.2$ & $22.9\pm0.4$ & $18.0\pm0.3$ & $\mathbf{17.3\pm0.2}$ \\
  \hline
  Activity(min) & $11.5\pm0.2$ & $13.5\pm0.2$ & $11.4\pm0.1$ &$11.8\pm0.02$ & $11.2\pm0.1$ & $76.0\pm0.2$ & $14.4\pm0.3$ & $11.6\pm0.2$ & $11.2\pm0.3$ & $\mathbf{11.1\pm0.2}$ \\
  Activity(median) & $15.1\pm0.3$ & $16.2\pm0.2$ & $14.7\pm0.1$ & $15.6\pm0.03$ & $14.6\pm0.1$ & $76.9\pm0.2$ & $16.8\pm0.1$ & $14.6\pm0.2$ & $13.9\pm0.1$ & $\mathbf{13.8\pm0.3}$ \\
  Activity(max) & $15.4\pm0.1$ & $18.5\pm0.3$ & $14.7\pm0.2$ & $15.1\pm0.04$ & $14.4\pm0.2$ & $74.0\pm0.1$ & $18.1\pm0.2$ & $15.0\pm0.1$ & $15.0\pm0.2$ & $\mathbf{14.2\pm0.2}$
 \end{tabular}
}
\end{table*}

{\bf Landmine}\footnote{\url{http://www.ee.duke.edu/~lcarin/LandmineData.zip}}: 
The landmine dataset consists of information collected from 29 real mine fields. 
Each field is represented by 9 features collected using sonar images. 
15 of these fields were dense in foliage, while the other 14 came from barren areas. 
We attempt to use the information from the foliage covered fields to improve mine detection in the barren areas.

{\bf Office-Caltech}\footnote{\url{http://www-scf.usc.edu/~boqinggo/domain_adaptation/GFK_v1.zip}}: 
This dataset is a collection of images of 10 categories from 4 domains. 
We transfer information from larger domains with higher resolution images, 
Amazon.com product pages or the Caltech10 collection, and attempt to recognize lower resolution webcam images.

{\bf Activity Recognition}: This dataset, collected by Subramanya et al. \cite{subramanya2012recognizing}, 
is a recording of a customized wearable sensor system. 
The system recorded measurements on 6 users doing various activities, such as walking or running,
going up or down the stairs, or simply lingering. 
We performed the same preprocessing on the data as that performed by Harel and Mannor \cite{harel2011learning} 
and treated each ordered pair of users as a source-target pair, totaling 30 possible pairs.

\subsection{Experiments and Results}

We set $S^S$ to be all available source data, and $S^T$ to be 5\% (unless specified otherwise) of the target samples, 
stratified and randomly selected; the rest was used as test data, 
giving us around $20\left| S^T \right| \cong \left| S^S \right|$ in almost all datasets. 
In all cases the models consisted of 50 decision trees. 
Following Breiman's work on random forest learning, we consider only a log number of features, 
selected at random, when performing feature selection (see also Louppe et al. \cite{NIPS2013_4928}). 

The landmine detection and activity recognition tasks exhibit special characteristics. 
Both tasks have class imbalance, where in the landmines problem only 6\% of the examples were positive (mine found), 
and in the activity problem, running and going up or down the stairs totaled less than 10\% of the examples. 
In these experiments we ascertained that the ratio of classes in the training data was similar to that of the entire target dataset. 
Moreover, with the severe class imbalance exhibited, 
error (or accuracy) is no longer an appropriate measurement and can lead to erroneous conclusions \cite{galar2012review}.
Therefore, in these cases we measured the balanced error rate (BER): 
$BER=\frac{1}{c}\Sigma_{i=1}^{c}\frac{e_i}{n_i}$, where $e_i$ and $n_i$ are the number of errors and the number of samples in class $i$ respectively, and $c$ is the number of classes.

We began by comparing the performance of our algorithms and the two benchmarks.
Results of these tests for the SER, STRUT and MIX algorithms are presented in Table~\ref{tab:err-forests-baselines}.
For the ``inversion'' and ``high-res'' datasets, the table includes performance at 1\%, 5\% and 10\% source-target ratios.
For the ``activity recognition" task, the table present the best case, worst case and median cases 
(minimum, maximum and median TGT error, respectively).

SER often, but not always, outperformed STRUT, while MIX produced a classifier that was best, 
or close to best, among all the methods,
even when one of the underlying forests performed poorly as happened, 
for example, in the three ``inversion'' sets. 
Such results indicate that MIX is robust to inferior performance of one of its constituents\footnote{
We ascertained that the good MIX results are not due to
``unfair" model complexity conditions, as each base method contains
50 trees, while MIX is a union of them all (100 trees).
To this end, we repeated all experiments with STRUT and SER containing
100 trees. No significant performance improvements were recorded
due to this modification.}.
We note that MIX continuously outperformed the SrcOnly baseline and in many cases matched or outperformed the TgtOnly baseline.

Next, we compared our algorithms to the previously described model transfer baselines and competing algorithms.
We focus our discussion on the MIX algorithm which generally performed well.
Following the results provided in Table~\ref{tab:err-forests-baselines}, 
we note that our MIX algorithm constantly outperformed the relabeling and bias benchmarks. 
Similarly, with the exception of the wine and landmines experiments, 
our MIX algorithm outperformed the simple pruning approach. 
Finally, we ascertained the superiority of our algorithms over the benchmarks using a t-test 
with p-value $<$ 0.005 for the relabeling, bias and pruning benchmarks.
Our algorithms also show success compared to ASVM and consensus regularization. 
These results were ascertained as statistically significant using a t-test (p-value $< 0.005$).


Figure~\ref{fig:inversion-error} presents the learning curves for the algorithms on the ``inversion" and ``high-res" datasets. 
The curves show error as a function of the ratio between source and target sizes. 
As seen before, our MIX algorithm yielded similar results to the better of the underlying algorithms, 
matching the error rates of SER in the ``high-res" problem and coming close to STRUT in the case of ``inversion".
In both cases the MIX algorithm outperformed the TgtOnly benchmark.

\begin{figure*}[t]
 \label{fig:inversion-error}
 \centering

  \subfloat[Error rates for the inversion problem]{
   \fbox{\includegraphics[width=.47\linewidth]{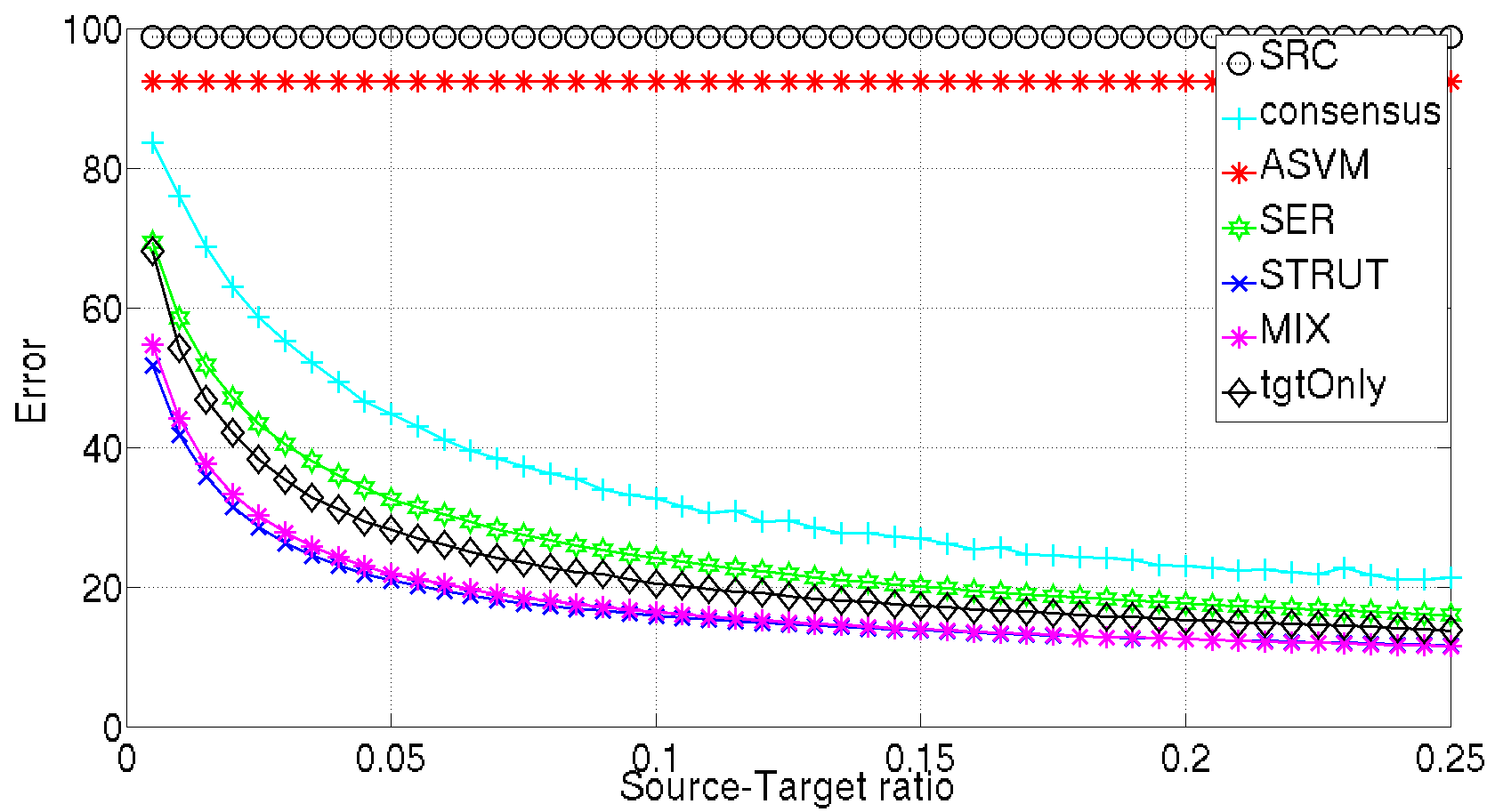}}
  }
  \subfloat[Error rates for the high-res problem]{
   \fbox{\includegraphics[width=.47\linewidth]{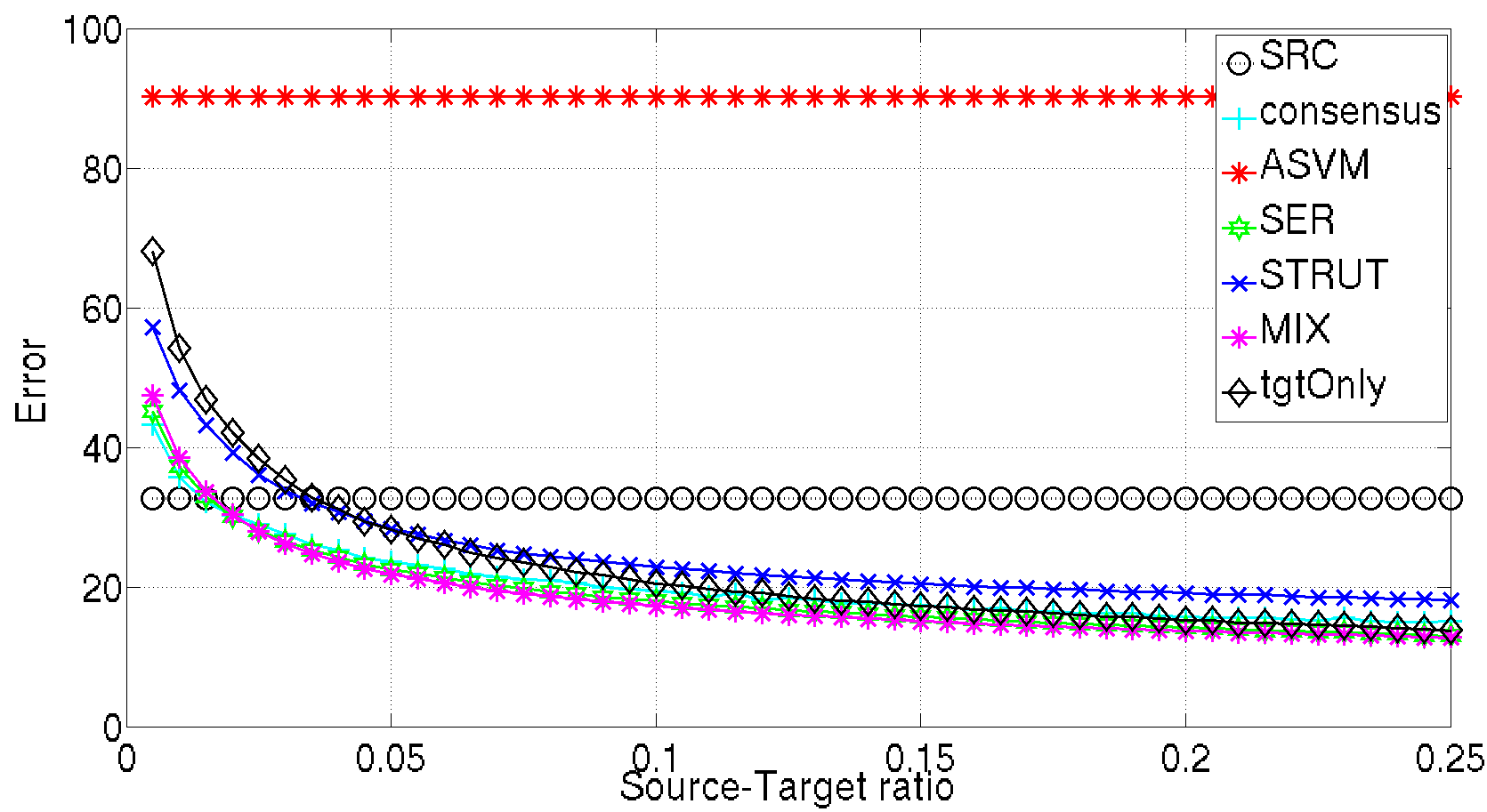}}
  }
 \caption{Error rates on the MNIST problems. The x-axis is the ratio between available source and target training examples. }
\end{figure*}

Finally, we would like to comment on the time complexity recorded while performing these experiments. 
We note that each tree in the forest can be processed independently, 
allowing for easy parallelism. 
We have observed that the average model transformation time of the ``letter" problem in a serial execution is 3.1s for MIX, 
1.6s for consensus regularization, and 11s for ASVM, while on a 10-core machine we saw linear improvement, 
with MIX taking 0.31s (To the best of our knowledge, there is no parallel version of the consensus regularization algorithm).
This advantage of our techniques is clearly visible in Table~\ref{tab:transfer-time}, 
where the average transfer runtime of our algorithms clearly superior to ASVM transfer.
In today's world of high throughput and massively parallel computing, 
a forest containing dozens or even hundreds of trees can be trained in almost the same time that it takes to build a single tree. 

\begin{table}[t]
\caption{Models transfer times in MS. STRUT and SER times are shown for a serial execution. \label{tab:transfer-time}}
\centering
\begin{tabular}{l|ll|ll}
\multicolumn{1}{c|}{\bf DATASET} & \multicolumn{1}{c}{\bf consensus} & \multicolumn{1}{c}{\bf ASVM} & \multicolumn{1}{c}{\bf STRUT} & \multicolumn{1}{c}{\bf SER}\\
\hline
mushroom & $106.2$ & $15.1$ & $1.8$ & $6.9$ \\
letter & $110.5$ & $7,739.8$ & $224.1$ & $149.2$ \\
wine & $9.3$ & $262.2$ & $77.1$ & $61.8$ \\
digits & $204.9$ & $592.3$ & $32.3$ & $47.5$ \\
landmines & $52.3$ & $50.2$ & $37.7$ & $35.7$ \\
\hline
inversion(1\%) & $7.1$ & $1,1843.5$ & $30.99$ & $174.3$ \\
inversion(5\%) & $23.3$ & $53,685.4$ & $186.1$ & $761.2$ \\
inversion(10\%) & $45.2$ & $122,839.4$ & $396.5$ & $1,327.4$ \\
\hline
high-res(1\%) & $7.9$ & $11,745.4$ & $23.7$ & $104.1$ \\
high-res(5\%) & $26.5$ & $53,819.7$ & $152.4$ & $355.7$ \\
high-res(10\%) &  $52.8$ & --- & $323.2$ & $940.9$ \\
\hline
Activity & $173.9$ & --- & $128.3$ & $37.5$ \\
\end{tabular}
\end{table}

\subsection{Comparing to Instance Transfer Algorithms}
\label{sec:comparingToInstance}
Unlike model transfer, in the \emph{instance transfer} approach to transfer learning,
all source training examples are available during the adaptation to the target.
At the outset, this additional information can lead to better performance.
In this sense, a comparison of a model transfer algorithm that learns without source examples
to an instance transfer algorithm is unfair. 
Nevertheless, it is interesting and important to understand the benefits and limitations of model 
transfer methods, and therefore, we conducted a comparative study of our model transfer 
methods to instance transfer algorithms. 

In this section we briefly mention our comparison of the MIX algorithm versus instance transfer algorithms.
The first is \emph{TradaBoost} \cite{dai2007boosting}, 
which is applied with random decision trees as the weak learners. 
Our tests show that the use of random decision trees produces much better results linear SVMs, 
as suggested by TradaBoost's authors.
The second algorithm tested was \emph{TrBagg} \cite{kamishima2009trbagg},
which initially trains classifiers on bootstrapped bags sampled with replacements from $T^S \cup T^T$ 
and regularizes the ensemble by filtering out classifiers which are overly biased towards the target domain.
TrBagg is also applied with random decision trees and for the filtering phase we use the MVT filtering technique, 
as suggested by the authors.
In all experiments we applied TradaBoost and TrBagg with up to 50 iterations.
The third algorithm we compared against  is the \emph{Frustratingly Easy Domain Adaptation} (FEDA) \cite{daume07frustratingly} meta-algorithm. 
FEDA generates a new middle-ground domain to train on by transferring the data from both source and target to the middle-ground domain.
To compare apples-to-apples we also applied FEDA with a random forest as its underlying
algorithm.
Finally, we test the \emph{Mixed-Entropy} (ME) \cite{goussies2014transfer} algorithm, 
a state-of-the-art forest-specific technique which combines source and target training samples using a weighted information gain measure.
The results of these experiments are presented in Table~\ref{tab:err-forests-instance}.

Our algorithms routinely outperform most other techniques and are competitive with FEDA.
Surprisingly, our study shows that MIX is comparable and even competitive with instance transfer algorithms, 
despite the unfair comparison.
In particular, MIX often showed similar results to FEDA and Mixed-Entropy 
and consistently outperformed TradaBoost TrBagg. 
For example, the error rates of TradaBoost, TrBagg, FEDA and Mixed-Entropy for the ``letter" dataset were $41.7$, $29.3$, $18.8$ and $17.7$, respectively.
The advantage of MIX over TradaBoost and TrBagg was backed by t-tests with all p-values $ < 0.01$.
No statistically significant performance difference could be observed for FEDA and MIX or ME and MIX.

\begin{table}[t]
\caption{Test error rates of the MIX transfer forest and the instance transfer algorithms --- lowest error in boldface \label{tab:err-forests-instance}}
\centering
\begin{tabular}{l|cccc|c}
\multicolumn{1}{c|}{\bf DATASET} & \multicolumn{1}{c}{\bf TRADA} & \multicolumn{1}{c}{\bf TrBagg} & \multicolumn{1}{c}{\bf FEDA} & \multicolumn{1}{c}{\bf ME} & \multicolumn{1}{c}{\bf MIX}\\
\hline
mushroom & $2.1$ & $\mathbf{0.4}$ & $0.5$ & $\mathbf{0.4}$ & $0.5$ \\
letter & $41.7$ & $29.3$ & $18.8$ & $17.7$ & $\mathbf{16.7}$ \\
wine & $57.6$ & $48.4$ & $\mathbf{43.2}$ & $46.5$ & $45.0$ \\
digits & $19.9$ & $15.3$ & $10.0$ & $\mathbf{2.9}$ & $3.8$ \\
USPS & $33.2$ & $ 14.3 $ & $13.5$ & $13.8$ & $\mathbf{13.3}$ \\
landmines & $45.8$ & $49.1$ & $\mathbf{38.8}$ & $40.5$ & $40.4$ \\
inversion(5\%) & $64.9$ & $34.3$ & $28.2$ & $27.1$ & $\mathbf{22.2}$ \\
high-res(5\%) & $58.4$ & $24.8$ & $\mathbf{20.4}$ & $24.8$ & $21.8$ \\
Activity(median) & $20.3$ & $20.2$ & $15.9$ & $20.2$ & $\mathbf{13.8}$ \\
\end{tabular}
\end{table}

\section{Can We Explain the Advantage of MIX?}
\label{sec:explain-mix}

Our empirical results indicate that the MIX algorithm 
performs well even when just one of its constituents gives good results and can moreover outperform each of its constituents.
We attribute this behavior to diversity and correlation among the ensemble members.
A given tree transformed by the SER algorithm is likely to be different in size than the original tree, 
as the expansion step will add to the tree depth and the reduction step will reduce the size of some of the branches, 
while the same tree transformed by the STRUT algorithm is likely to retain its original size but with different thresholds.
Thus, the pairwise correlation in the MIX forest between two trees transformed from the same original tree 
are expected to exhibit low correlation and result in a more diverse forest.

Let $y f(x)$ denote the classification margin of a soft binary classifier $f$ with respect to a 
point $x$ whose label is $y$, i.e., $yf(x)>0$ iff $f(\cdot)$ is correct on $(x,y)$.
We consider the expected error of an ensemble with weight distribution $Q$ over its members 
via two risk functions commonly used in PAC-Bayesian literature, 
the \emph{Bayes risk} $R \left( B_Q \left( f \right) \right)$, 
also called \emph{risk of the majority vote}, and the \emph{Gibbs risk} $R \left( G_Q \left( f \right) \right)$, 
found in Equations~\ref{eq:bayes-risk} and \ref{eq:gibbs-risk} respectively.
\begin{equation}
\label{eq:bayes-risk}
R \left( B_Q \left( f \right) \right) = \underset{\left( x,y \right) \sim D}{\mathbb{E}}
I \left( \underset{f \sim Q}{\mathbb{E}} y\cdot f \left( x \right) \leq 0 \right),
\end{equation}
\begin{equation}
\label{eq:gibbs-risk}
R \left( G_Q \left( f \right) \right) = \underset{\left( x,y \right) \sim D}{\mathbb{E}}
\left( \underset{f \sim Q}{\mathbb{E}} I \left( f \left( x \right) \neq y \right) \right).
\end{equation}

While it is well known that $R \left( B_Q \right) \leq 2R \left( G_Q \right)$ 
(e.g., \cite{shawe2003pac, mcallester2003simplified, germain2009pac}), 
Germain et al. have shown that with more pairwise ``non-correlated" ensemble members 
(those with a non-positive pairwise covariance of their risk between ensemble members), 
it is possible to provide the tighter bound found in Corollary~\ref{cor:c-bound} 
using the measure of expected disagreement, $d_Q$. 
\begin{corollary}[Corollary 16\cite{germain2015risk}]
\label{cor:c-bound}
Given n voters having non-positive pairwise covariance of their risk under a uniform distribution Q, we have\\
$R \left( B_Q \right) \leq {\cal{C}}_Q  \leq \frac{1}{n\cdot \left( 1-2R \left( G_Q \right) \right)^2}$ \\
where ${\cal{C}}_Q = 1-\frac{\left( 1 - 2 \cdot R \left( G_Q \right) \right)^2}{1 - 2 \cdot d_{Q}}$ \\
and $d_{Q}=\underset{\left( x,y \right) \sim D}{\mathbb{E}}
\left( \underset{f_1 \sim Q}{\mathbb{E}} \underset{f_2 \sim Q}{\mathbb{E}} I \left( f_1 \left( x \right) \neq f_2 \left( x \right) \right) \right)$.
\end{corollary}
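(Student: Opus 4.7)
The plan is to decompose the corollary into its two inequalities and handle them separately, since they rely on different tools.

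For the left inequality $R(B_Q) \le \mathcal{C}_Q$, my approach will be to apply the one-sided Chebyshev (Cantelli) inequality to the margin random variable $M_Q(x,y) = \mathbb{E}_{f \sim Q}[y f(x)]$. First I would verify the two identities that drive the bound: $\mathbb{E}_{(x,y)}[M_Q] = 1 - 2R(G_Q)$ (using $\mathbb{E}[yf(x)] = 1 - 2R(f)$ for each $\pm 1$-valued $f$) and $\mathbb{E}_{(x,y)}[M_Q^2] = 1 - 2d_Q$ (writing the square as a double expectation over $f_1, f_2 \sim Q$ and using $y^2 = 1$ together with $\mathbb{E}[f_1(x) f_2(x)] = 1 - 2\,d_{f_1,f_2}$). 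These give $\mathrm{Var}(M_Q) + \mathbb{E}[M_Q]^2 = 1 - 2 d_Q$. Since $B_Q$ errs exactly when $M_Q \le 0$, Cantelli's inequality (valid when $\mathbb{E}[M_Q] > 0$, i.e.\ when $R(G_Q) < 1/2$) yields $R(B_Q) \le \mathrm{Var}(M_Q)/(\mathrm{Var}(M_Q)+\mathbb{E}[M_Q]^2)$, which after substitution is precisely $\mathcal{C}_Q$.

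For the right inequality $\mathcal{C}_Q \le 1/(n(1-2R(G_Q))^2)$, I would exploit the non-positive covariance hypothesis to upper bound $1 - 2d_Q$. With $r_i = \mathbb{I}(f_i(x) \ne y)$ and $Q$ uniform on $n$ voters, a direct expansion gives
\begin{equation*}
d_Q \;=\; 2 R(G_Q) \;-\; \tfrac{2}{n^2}\sum_{i,j}\mathbb{E}[r_i r_j],
\end{equation*}
so that
\begin{equation*}
\tfrac{1}{n^2}\sum_{i,j}\mathbb{E}[r_i r_j] \;=\; R(G_Q)^2 \;+\; \tfrac{1}{n^2}\sum_{i,j}\mathrm{Cov}(r_i, r_j).
\end{equation*}
The non-positive pairwise covariance assumption kills the off-diagonal contribution, leaving only the diagonal, and the concavity of $x \mapsto x(1-x)$ bounds $\tfrac{1}{n}\sum_i R(f_i)(1-R(f_i))$ by $R(G_Q)(1-R(G_Q))$. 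Combining these gives $1 - 2d_Q \le (1 - 2R(G_Q))^2 + 4R(G_Q)(1-R(G_Q))/n \le (1-2R(G_Q))^2 + 1/n$. Substituting into the definition of $\mathcal{C}_Q$ and simplifying $\mathcal{C}_Q \le (1/n)/\bigl((1-2R(G_Q))^2 + 1/n\bigr) \le 1/(n(1-2R(G_Q))^2)$ closes the argument.

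The main obstacle I anticipate is bookkeeping rather than mathematics: keeping the identities for $\mathbb{E}[M_Q]$ and $\mathbb{E}[M_Q^2]$ clean when moving between expectations over $Q$ and over $D$, and correctly isolating where each hypothesis is invoked (Cantelli needs $R(G_Q) < 1/2$ for the first inequality; the covariance condition and uniformity of $Q$ drive the second). A secondary subtlety is the treatment of ties $M_Q(x,y) = 0$ in the definition of $B_Q$, which in the binary $\pm 1$ setting either requires a convention or a mild modification of the event $\{M_Q \le 0\}$; I would address this by noting that Cantelli's bound still applies to the closed half-line and that the resulting inequality is only weakened.
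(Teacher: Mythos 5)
The paper offers no proof of this statement: it is imported verbatim as Corollary~16 of Germain et al.\ (2015), so there is nothing internal to compare against. Your reconstruction is correct and is essentially the canonical argument from that source: the moment identities $\mathbb{E}[M_Q]=1-2R(G_Q)$ and $\mathbb{E}[M_Q^2]=1-2d_Q$ plus Cantelli give $R(B_Q)\le \mathcal{C}_Q=\mathrm{Var}(M_Q)/\mathbb{E}[M_Q^2]$, and for the second inequality the non-positive off-diagonal covariances leave only the diagonal Bernoulli variances, yielding $\mathrm{Var}(M_Q)\le 1/n$ and hence $\mathcal{C}_Q\le \frac{1/n}{(1-2R(G_Q))^2+1/n}\le \frac{1}{n(1-2R(G_Q))^2}$, exactly as you compute. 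Your two flagged subtleties are the right ones: the hypothesis $R(G_Q)<1/2$ is needed for both inequalities (it is implicit in the corollary as quoted, and also guarantees $1-2d_Q\ge(1-2R(G_Q))^2>0$ so that $\mathcal{C}_Q$ is well defined), and the tie convention for $M_Q=0$ only weakens the bound. No gaps.
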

Thus, 
in the likely case where two trees transformed from the same original tree are ``non-correlated", 
the bound on the Bayes risk for the MIX forest is nearly halved as $n$ doubles.

\begin{figure}[t]
 \centering
 \includegraphics[width=1\linewidth]{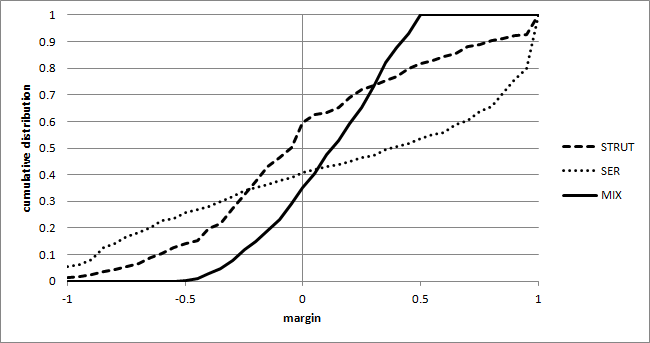}
 \caption{Cumulative distribution functions of margins for disagreeing base classifiers}
 \label{fig:cumulative}
\end{figure}

We now consider the `mixed boxes' experiment presented Section~\ref{sec:examples}, 
where SER is significantly better than STRUT and MIX is even better than SER (Table~\ref{tab:experiments-synth-err}).
We calculated the empirical Gibbs risk of the forests, 
measuring $0.17$ and $0.15$ for the STRUT and SER forests respectively.
Following the last inequality in Corollary~\ref{cor:c-bound} we get bounds of $0.14$ and $0.13$ 
on the Bayes risk of the STRUT and SER forests, respectively, 
and a much tighter bound of $0.07$ for the MIX forest.
These results indicate that the Bayes risk for the MIX algorithm are expected to be lower than those of its constituents, 
as are the actual test results.

Germain et al. have also shown the following formulation to the ${\cal{C}}$-bound:
$$
{\cal{C}}_Q = 1-\frac{\left( 1 - 2 e_Q -d_Q \right)^2}{1 - 2 \cdot d_{Q}}
$$ 
where 
$$
e_Q = \underset{\left( x,y \right) \sim D}{\mathbb{E}}
\left( \underset{f_1 \sim Q}{\mathbb{E}} \underset{f_2 \sim Q}{\mathbb{E}} 
I \left( f_1 \left( x \right) \neq y \right) I \left( f_2 \left( x \right) \neq y \right) \right)
$$ is a measure of expected joint error \cite{germain2015risk}. 
We measured the empirical joint error and disagreement, 
noting that the empirical joint error of the three algorithms was similar 
while the disagreement measure of MIX was much higher than that of SER or STRUT, 
which resulted in lower ${\cal{C}}_Q$ bound for the MIX algorithm.

We also informally argue that the attractive property of the MIX advantage over its constituents is related to the
distribution of empirical pointwise classification margins in cases where SER and STRUT disagree in their predictions.
In Figure~\ref{fig:cumulative} we plot the cumulative distribution functions (CDFs) of empirical margins obtained by SER, 
STRUT and MIX for the same `mixed boxes' experiment when SER and STRUT disagree. 
The advantage of MIX and SER here is evident by their lower CDF values at the origin in Figure.~\ref{fig:cumulative}.
Schapire et. al. addressed these circumstances and related better generalization to better empirical margin profiles, 
as given in Theorem~\ref{thrm:margin-profile}:
\begin{theorem}[Theorem~1\cite{schapire1998boosting}]
\label{thrm:margin-profile}
let ${\cal{D}}$ be a domain over ${\cal{X}} \times \left\{ -1, 1 \right\}$ with distribution , 
let $S$ be a sample of $m$ examples chosen independently at random according to ${P}$.
Assume that the base-classifier space ${\cal{H}}$ is finite and let $\delta > 0$.
Then with probability at least $1 - \delta$ over the random choice of the training set $S$,
every weighted average function $f$ satisfies the following bound for all $\theta > 0$:

$\underset{\left( x,y \right) \sim {\cal{D}}}{p} \left[ y f\left( x\right) \leq 0 \right] \leq$

$\underset{\left( x,y \right) \sim S}{p} \left[ y f\left( x\right) \leq \theta \right] + 
\mathcal{O} \left( \frac{1}{\sqrt{m}} \left( \frac{\log m \log \left| {\cal{H}} \right|}{\theta^2} + 
\log \left( \frac{1}{\delta} \right) \right)^{1/2} \right)$.
\end{theorem}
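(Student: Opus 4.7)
The plan is to prove the margin bound via the classical randomized-sampling (or ``ghost classifier'') argument that approximates the weighted average $f$ by a finite-sample surrogate, and then apply a union bound over the induced finite hypothesis class. Throughout I write $f = \sum_{h\in\mathcal{H}} \alpha_h h$ with $\alpha_h \geq 0$ and $\sum_h \alpha_h = 1$, treating the weights as a probability distribution $\mu_f$ over $\mathcal{H}$.

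First, I would fix $\theta>0$ and an integer $N$ (to be optimized at the end), and define an auxiliary random classifier $f_N(x) = \frac{1}{N}\sum_{i=1}^N h_i(x)$, where $h_1,\dots,h_N$ are drawn i.i.d.\ from $\mu_f$. By construction $\mathbb{E}_{\mu_f^N}[yf_N(x)] = yf(x)$, and since each $yh_i(x) \in \{-1,+1\}$, Hoeffding's inequality yields pointwise bounds of the form $\Pr_{\mu_f^N}[yf_N(x) \leq \theta/2 \mid yf(x) > \theta] \leq \exp(-N\theta^2/8)$, with a symmetric statement bounding the probability that $yf_N(x) > \theta/2$ when $yf(x) \leq 0$. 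Integrating these pointwise statements over $(x,y)\sim\mathcal{D}$ and $(x,y)\sim S$ respectively lets me sandwich both the true 0-margin mass of $f$ and the empirical $\theta$-margin mass of $f$ by the corresponding $(\theta/2)$-margin quantities for $f_N$, up to an additive $\exp(-N\theta^2/8)$.

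Second, I would control the uniform deviation between empirical and true $(\theta/2)$-margin masses of $f_N$. Because each realization of $f_N$ is a uniform average of $N$ classifiers from $\mathcal{H}$, there are at most $|\mathcal{H}|^N$ such classifiers. For any fixed $f_N$ and fixed threshold $\theta/2$, the event $\{yf_N(x) \leq \theta/2\}$ is a single Bernoulli, so standard Chernoff combined with a union bound over $\mathcal{H}^N$ gives, with probability at least $1-\delta$ over the sample $S$,
\begin{equation*}
\sup_{f_N \in \mathcal{H}^N} \bigl| \Pr_{\mathcal{D}}[yf_N(x)\leq \theta/2] - \Pr_{S}[yf_N(x)\leq \theta/2] \bigr| \leq \varepsilon_{N,\delta},
\end{equation*}
with $\varepsilon_{N,\delta} = O\!\bigl(\sqrt{(N\log|\mathcal{H}| + \log(1/\delta))/m}\bigr)$.

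Third, I would chain the inequalities: start from $\Pr_{\mathcal{D}}[yf(x)\leq 0]$, pass to $\Pr_{\mathcal{D}}[yf_N(x)\leq \theta/2]$ (paying $\exp(-N\theta^2/8)$), then to $\Pr_{S}[yf_N(x)\leq \theta/2]$ (paying $\varepsilon_{N,\delta}$), then back to $\Pr_{S}[yf(x)\leq \theta]$ (paying another $\exp(-N\theta^2/8)$). Setting $N = \lceil (4/\theta^2)\log m\rceil$ balances the tail and the union-bound cost and, after absorbing constants, produces the stated $\mathcal{O}\bigl(m^{-1/2}\sqrt{\theta^{-2}\log m\log|\mathcal{H}| + \log(1/\delta)}\bigr)$ overhead. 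A final step would make the bound hold simultaneously for all $\theta>0$ by a standard discretization of $\theta$ on a geometric grid and a union bound (paying only an extra $\log\log$ factor absorbed in the $\mathcal{O}$).

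The main obstacle will be the careful bookkeeping required to make the two independent sources of randomness---the $\mu_f^N$ draw of $h_1,\dots,h_N$ and the i.i.d.\ draw of $S$---interact properly when one wants the event ``$\varepsilon_{N,\delta}$-uniform convergence'' to hold with probability over $S$ alone while the ghost classifier inequalities hold in expectation over $\mu_f^N$. Handling this via Fubini and conditioning is standard but subtle, and an additional technical care point is ensuring the finite-union-bound step really covers \emph{every} weighted average $f$ by reducing it to its induced distribution $\mu_f$; this reduction is what makes the $|\mathcal{H}|^N$ cover sufficient even though the space of all convex combinations is uncountable.
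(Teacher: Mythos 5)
The paper does not prove this statement: it is imported verbatim (as Theorem~1 of Schapire et al.\ \cite{schapire1998boosting}) and used only to support the informal margin-profile discussion of MIX, so there is no in-paper proof to compare against. Your proposal is a correct reconstruction of the standard argument from that original source --- the ghost-classifier $f_N$ drawn from $\mu_f^N$, pointwise Hoeffding at margin $\theta/2$, a union bound over the at most $\left|{\cal{H}}\right|^N$ realizations, the choice $N = \Theta(\theta^{-2}\log m)$, and a discretization over $\theta$ --- and the bookkeeping points you flag (Fubini between the two sources of randomness, and reducing an arbitrary convex combination to its induced distribution $\mu_f$) are exactly the ones the original proof handles.
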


Intuitively, when an ensemble algorithm is correct, its underlying classification margins tend to be
high and correlated, and when it is wrong, its underlying 
margins tend to be more dispersed as the result of low pairwise correlation. 
Combining STRUT and SER in MIX benefits from some correctly
performing constituents within the erroneous ensemble. 
In other words, as STRUT and SER are only weakly correlated, MIX benefits when combining them. 

\section{Related work}
\label{sec:related}

The generic title ``transfer learning'' encompasses quite a few different paradigms. 
As noted by Levy and Markovitch\cite{levy2012teaching}, such paradigms are motivated
by (implicit or explicit) modeling or process assumptions.
For example, some paradigms, such as ``feature transfer'', are motivated by assumptions on the linkage between source and target  domains (e.g., features at the target obtained by certain mappings applied on the source features). 
The survey by Pan et al. \cite{pan2010survey} identifies the following settings, 
which are not mutually exclusive.

{\bf Model Transfer}: This setting, within which the present work resides, assumes that a good predictor for the source has been learned, 
resulting in an attempt to adapt the model to the target problem using a training set from the target domain. 
Model transfer techniques are effective when a similar inductive bias performs well for the related tasks 
or when source examples are impossible to retain or distribute. 
Present model transfer model methods rely on a biased regularizer 
\cite{kienzle2006personalized, yang2007cross, rodner2008learning, tommasi2010safety, rodner2011learning}, 
on aggregating multiple source-target predictors 
\cite{baxter2000model, rettinger2006boosting, luo2008transfer, ruckert2008kernel}, 
utilizing model parameter transfer as priors \cite{pratt1991direct, thrun1994learning, eaton2008modeling}, 
or by feature weight estimation \cite{eruhimov2008transferring, rodner2009learning}.

{\bf Instance Transfer}: In this setting 
one assumes certain instances of the source data can be used as examples in the target domain. 
Under this assumption, it is better to take some of the source data ``as is", 
and the problem reduces to identifying the relevant instances and ignoring the irrelevant ones, 
using a process of elimination or weighting. 
Boosting based \emph{instance weighing} is common practice in this category \cite{dai2007boosting, pardoe2010boosting, yao2010boosting, lu2013selective}, 
as is \emph{instance elimination} (and sub-sampling) \cite{jiang2007instance, kamishima2009trbagg},
but other techniques exist for utilizing the source information in different ways \cite{wu2004improving, daume07frustratingly, saenko2010adapting, goussies2014transfer}. 

{\bf Features Transfer}: Assuming some partial relation between the source and target features exists, 
algorithms working in this setting attempt to learn a feature mapping or weighing. 
These techniques represent an attempt to find the ``common denominators" of the learning tasks, 
matching features, or combinations of features, to identify meaning in partial information. 
Standard techniques to address this problem include norm optimization \cite{raina2007self, evgeniou2007multi, harel2011learning} 
and manipulating and combining features \cite{eaton2008modeling, pan2008transfer, levy2012teaching}

{\bf Domain Adaptation (DA)} and {\bf Multi-Task Learning (MTL)}:
In domain adaptation the difference between the domains is the result of different feature and labeling spaces; 
however, DA is typically considered within a semi-supervised context where an abundance of unlabeled data is available as well \cite{daume2006domain, ben2010theory, jiang2008literature},
while in MTL the goal is to produce a good hypothesis for several related learning problems simultaneously \cite{caruana1998multitask}. 
Some notable approaches for these settings are based on similarity to a common predictor
\cite{evgeniou2004regularized,dredze2010multi,daume07frustratingly, jie2011multiclass},
finding a shared representation \cite{blitzer2006domain, jiang2008cross, saenko2010adapting, duan2012exploiting, fernando2013unsupervised} or a shared subspace
\cite{argyriou2008convex,ando2005framework,lounici2009taking, baktashmotlagh2013unsupervised}, 
as well as probabilistic approaches \cite{lawrence2004learning,schwaighofer2005hierarchical,yu2005learning,bonilla2007multi}. 

{\bf Semi-supervised Transfer}: source and target domains are the same and source data includes only unlabeled examples \cite{liao2005logistic, raina2007self, jiang2008cross, duan2009domain, duan2009domain-b, duan2012domain, duan2012domain-b, duan2012visual}.
Semi-supervised transfer is very attractive in application areas such as machine vision, video event detection and text analysis. 

For a comprehensive review of these fields
the reader is referred to the works of Pan and Yang\cite{pan2010survey} and Jiang\cite{jiang2008literature}. 

\subsection{Transfer Learning using Decision Trees}
Most early transfer learning methods were based on neural networks, 
and while SVMs and ensemble techniques have become prominent in this field, 
DT models are still under-explored in this setting. 
Of the few tree-based techniques researched, 
only the work by Won et al. operates in our model transfer setting.
Specifically, Won et al. proposed a simple technique to update an existing tree trained only on source samples using target samples \cite{won2007transfer}.
Their approach resembled a batch iterative learning technique which relies solely on iterative expansion steps.
This technique does not consider any refitting of numeric feature thresholds.



Adaptive DTs and stream sub-sampling have been used in data streams to handle massive, high-speed streams \cite{domingos2000mining, jin2003efficient, hulten2005mining} 
as well as concept drift \cite{hulten2001mining, gama2006decision, nunez2007learning}, 
modifying the DT as new samples arrive. 
However, there are no explicit source and target distributions in this setting, 
but a distribution that incrementally changes over time, 
requiring retaining of original samples and constant modifications to the DTs.


Finally, in MTL, Faddoul et al. presented a variation of AdaBoost with decision stumps fitted to multiple tasks \cite{faddoul2010boosting}. 
The same authors later applied boosting with DTs while using a modified information gain (IG) criterion  \cite{faddoul2012learning}. 
Another approach builds an ensemble by combining multiple random DTs, 
where task-driven splits are added in each tree, in addition to ordinary feature splits \cite{simm2014treeMTL}. 
In this manner, the trees may contain branches uniquely dedicated to particular subsets of tasks.

\section{Concluding Remarks}

Exploiting the modularity and flexibility of decision trees, 
we designed two model transfer learning algorithms that utilize a model trained over the source domain and effectively adapted it to 
the target domain using local adjustments of the tree parameters and/or its architecture. 
Our experiments with synthetic data indicate that the effectiveness of the algorithms varies with the transformation type. 
Our final MIX algorithm combines the proposed base algorithms and often outperform their underlying constituents. 
It also achieve performance superior to leading model transfer algorithms and, 
moreover, are competitive with  instance transfer algorithms and even outperform some of them.

An attractive feature of the proposed method (and any effective model transfer algorithm) is that the source data can be discarded after training over the source domain, 
and transferring the models to the target domain can be computed later on, whenever needed. 
A nice application of this property would be to devise a bank of models computed over a variety of source domains that 
can later be used to construct models for any related target domain.

An open issue is to capture formally and systematically the ramifications of possible 
source/target transformations over the tree structure. 
For example, we asserted above that some geometrical transformation of the support of the inputs density 
can be captured and modeled via threshold changes in a decision tree. 
Yet it would be interesting to formally map and relate these and other data transformations to the tree adaptation mechanisms.

Furthermore, our work has not touched upon sample complexity and formal generalization ability. 
This problem of devising a comprehensive learning theory for transfer learning of decision trees 
might be contingent upon formally defining an effective model for possible relations between the source and the target.

\appendices
\section{Other Numerical Examples on Synthetic Data}
\label{sec:Synthetic}

In Section~\ref{sec:examples} we presented two small synthetic transfer learning challenges to illustrate the behavior of our algorithms.
These two examples were selected from a set of 10 problems we synthetically designed to capture simple transformations between the source and target problems. 
Here we define all 10 of these problems, 
present the test performance of the algorithms over these problems, 
and provide details on the experimental protocol used.

Each challenge is defined via a fixed binary concept over the source domain and a transformation that maps the
concept to the target domain. The concepts and transformations are defined below for each challenge.
All challenges were defined such that $\cX$ is the positive unit quadrant in $\reals^3$ (using numeric features).
In all experiments we maintained the relation $\left| S^S \right| = 5 \left| S^T \right|$ and took  $|S^T| = 64$. 
In all cases, $P(x)$, the marginal distribution over $\cX$, is the uniform distribution over $\cX$.
Each challenge was randomly repeated $1,000$ times and each test error reported was 
computed as the averages over a test set consisting of $10,000$ random target domain points.
This large number of trials ensured statistically significant comparisons of the results.

\begin{table}[t]
\centering
\begin{tabular}{l|c|c|ccc}
 & \multicolumn{1}{c}{$\cD_S$}  & \multicolumn{1}{c}{$\cD_T$}  & \multicolumn{1}{|c}{\bf STRUT}  & \multicolumn{1}{c}{\bf SER}  & \multicolumn{1}{c}{\bf MIX} \\
 \hline
 mixture & \fbox{\includegraphics[width=0.025\textwidth]{mixSource2.png}} &  \fbox{\includegraphics[width=0.025\textwidth]{mixTarget.png}} & $7.7$ & $6.6$ & $\mathbf{5.5}$ \\
 \hline
 inversion & \fbox{\includegraphics[width=0.025\textwidth]{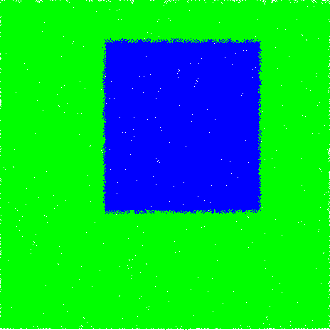}} &  \fbox{\includegraphics[width=0.025\textwidth]{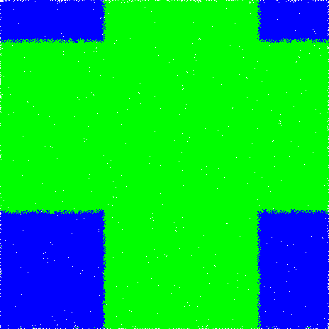}} & 6.1 & $\mathbf{6.0}$ & $\mathbf{6.0}$ \\
 \hline 
 moving source & \fbox{\includegraphics[width=0.025\textwidth]{source.png}} & \fbox{\includegraphics[width=0.025\textwidth]{movingTarget.png}} & $\mathbf{6.1}$ & $12.8$ & $6.4$ \\
\hline
expanding & \fbox{\includegraphics[width=0.025\textwidth]{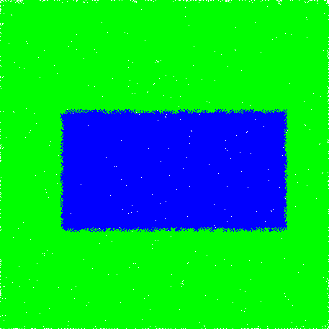}} & \fbox{\includegraphics[width=0.025\textwidth]{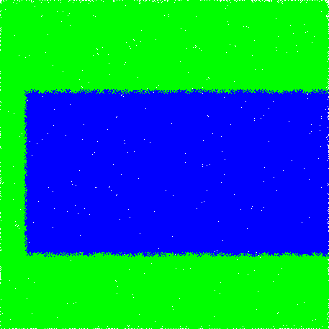}} & 10.7 & 15.4 & $\mathbf{10.6}$ \\
\hline
shrinking & \fbox{\includegraphics[width=0.025\textwidth]{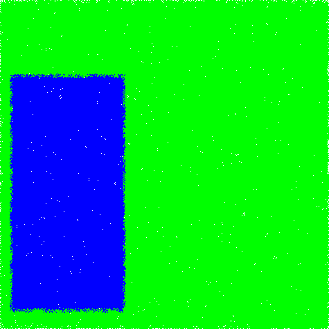}} & \fbox{\includegraphics[width=0.025\textwidth]{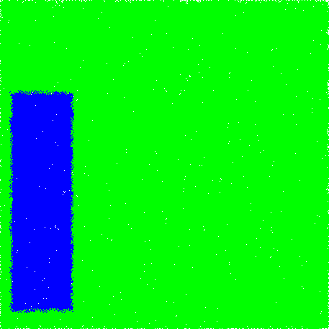}} & 7.1 & $\mathbf{3.7}$ & 4.9 \\
\hline
axis swap & \fbox{\includegraphics[width=0.025\textwidth]{source1.png}} & \fbox{\includegraphics[width=0.025\textwidth]{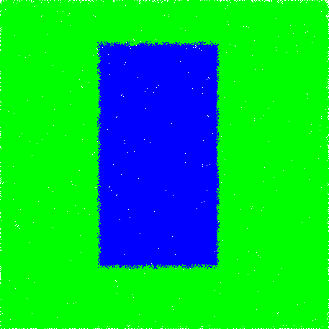}} & 9.0 & 11.8 & $\mathbf{8.9}$ \\
\hline
noisy target & \fbox{\includegraphics[width=0.025\textwidth]{source1.png}} & \fbox{\includegraphics[width=0.025\textwidth]{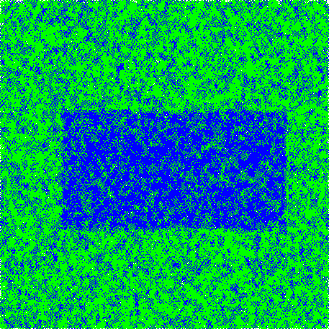}} & 23.4 & $\mathbf{14.9}$ & 15.1 \\
\hline
noisy source & \fbox{\includegraphics[width=0.025\textwidth]{noisySource1.png}} & \fbox{\includegraphics[width=0.025\textwidth]{source1.png}} & 18.8 & $\mathbf{6.6}$ & 8.2 \\
\hline 
rotated source & \fbox{\includegraphics[width=0.025\textwidth]{source.png}} & \fbox{\includegraphics[width=0.025\textwidth]{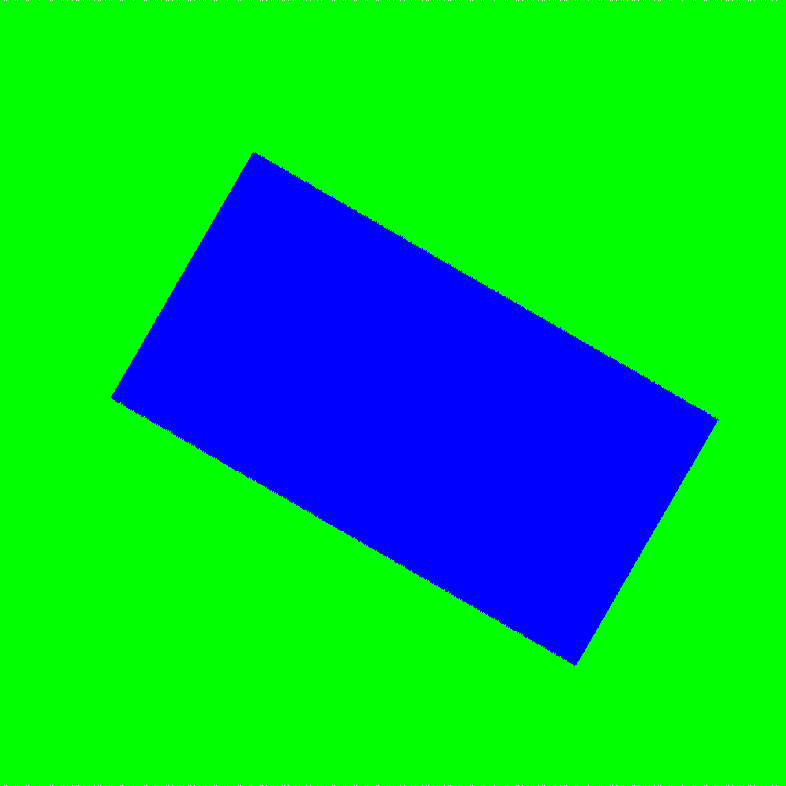}} & 19.2 & $\mathbf{13.4}$ & 16.3 \\
\hline
fish-eye & \fbox{\includegraphics[width=0.025\textwidth]{source1.png}} & \fbox{\includegraphics[width=0.025\textwidth]{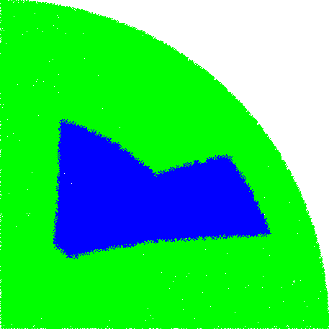}} & 18.5 & $\mathbf{17.3}$ & 17.8 \\
\hline
refined sine & \fbox{\includegraphics[width=0.025\textwidth]{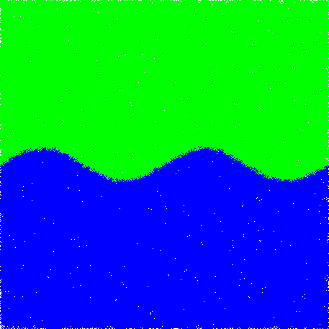}} & \fbox{\includegraphics[width=0.025\textwidth]{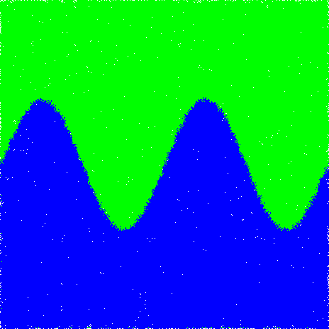}} & $\mathbf{13.3}$ & 14.3 & 13.8 \\
\end{tabular}
\caption{Error measurements for transfer forests on synthetic source-target transformations\label{tab:experiments-synth-figs}}
\end{table}

Each challenge is defined via a binary ``concept'' and a transformation. The concept is the source region where points are labeled '$+$'.
In most cases the concept is a random 3D box in the source domain, $\cX_S$, whose volume is 25\% of vol$(\cX_S)$.
Below we refer to such a box as a ''standard random box''.
We define the following transformations:
\begin{enumerate}[\labelwidth=0.2em]
\item
In the \emph{source/target mix}, the source concept is a 50-50 mixture of two standard random boxes,
and the target concept is a single random box among the two. 
\item 
In \emph{source inversion} the source concept is a standard random box. We represent this box by  
$(\alpha_0,\alpha_1,\alpha_2)$ and $(\beta_0,\beta_1,\beta_2)$, such that a point $x=(x_0,x_1,x_2)$ is labeled as positive in the source domain 
(i.e., it is in the box) iff $ \alpha_i\leq x_i \leq \beta_i$, $i = 0, 1, 2$.
Then , the target concept is defined such that a target point, $x=(x_0,x_1,x_2)$, is labeled positive iff 
$x_i <\alpha_i$  or $\beta_i < x_i$, $i = 0, 1, 2$.
\item
In \emph{moving source} the source concept is a standard random box 
and the target concept is obtained by a random displacement of the source box along each of the axes that still keeps the displaced 
box within $\cX_T$.
\item
In the \emph{expanding} and \emph{shrinking} source challenges, 
the source concept is a standard random box. In the expanding challenge we expand the source box so that its volume is 
doubled in the target domain, and in the shrinking challenge its volume is halved. 
\item
In \emph{axis-swap} the source concept is a standard random box and the target concept is obtained by swapping two randomly selected dimensions (among the three).
\item
In \emph{noisy target} the source concept is a standard random box and the target concept is the same box but each 
point inverts its label with probability 0.25.
\item 
The \emph{noisy source} challenge is precisely the inverse of the noisy target challenge where the target concept is a
standard random box and the source concept is its noisy distortion.
\item
In the \emph{rotated source} challenge, a standard random box in the source is rotated about a random vector by a random angle using 
a standard 3D rotation matrix.
\item
In the \emph{fish-eye transformation} challenge, each point is represented within a spherical coordinate system, 
namely, $x=\left( r, \theta, \phi \right)$ where $0 \leq \theta \leq \frac{\pi}{4}$ and $0 \leq \phi \leq \frac{\pi}{2}$. 
For every source point $x_s=\left( r_s, \theta_s, \phi_s \right)$, there exists a maximum value $r_m$ such that 
$\left( r_m, \theta_s, \phi_s \right)$ is still inside the source feature space $\cX_S$; 
thus, we transfer the point $x_s$ from the source domain to the point $x_t=\left( r_t, \theta_s, \phi_s \right)$ in the target domain, 
such that $r_t = \frac{r_s}{r_m}$. This is of course a deterministic transformation applied to each standard random concept (box) in the source.
\item
Our final challenge is the \emph{refined sine boundary}, in which the source concept is defined by a sine wave manifold that changes 
frequency and amplitude in the target.
A source point $x=\{x_0,x_1,x_2\}$ is labeled positive iff $0.5 + 0.05 \cdot \sin\left(4\pi\left( x_0 + x_1\right) \right) < x_2$.
Our target domain is defined similarly, but with random frequency $0.25 \leq \phi \leq 0.5$,
and a random amplitude $0 \leq a \leq 0.5$.
Thus, a point $x= (x_0,x_1,x_2)$ in the target domain is labeled positive iff 
$0.5 + a \cdot \sin\left(\frac{2\pi}{\phi}\left( x_0 + x_1\right) \right) < x_2$.
\end{enumerate}
In Table~\ref{tab:experiments-synth-figs} we present the test error results of STRUT, SER and MIX for these challenges.


%
%
%

\ifCLASSOPTIONcaptionsoff
  \newpage
\fi




\bibliographystyle{IEEEtran}
\bibliography{IEEEabrv,trees_transfer}

\begin{thebibliography}{10}
\providecommand{\url}[1]{#1}
\csname url@samestyle\endcsname
\providecommand{\newblock}{\relax}
\providecommand{\bibinfo}[2]{#2}
\providecommand{\BIBentrySTDinterwordspacing}{\spaceskip=0pt\relax}
\providecommand{\BIBentryALTinterwordstretchfactor}{4}
\providecommand{\BIBentryALTinterwordspacing}{\spaceskip=\fontdimen2\font plus
\BIBentryALTinterwordstretchfactor\fontdimen3\font minus
  \fontdimen4\font\relax}
\providecommand{\BIBforeignlanguage}[2]{{%
\expandafter\ifx\csname l@#1\endcsname\relax
\typeout{** WARNING: IEEEtran.bst: No hyphenation pattern has been}%
\typeout{** loaded for the language `#1'. Using the pattern for}%
\typeout{** the default language instead.}%
\else
\language=\csname l@#1\endcsname
\fi
#2}}
\providecommand{\BIBdecl}{\relax}
\BIBdecl

\bibitem{shotton2013real}
J.~Shotton, T.~Sharp, A.~Kipman, A.~Fitzgibbon, M.~Finocchio \emph{et~al.},
  ``Real-time human pose recognition in parts from single depth images,''
  \emph{Communications of the ACM}, vol.~56, no.~1, pp. 116--124, 2013.

\bibitem{liao2005logistic}
X.~Liao, Y.~Xue, and L.~Carin, ``Logistic regression with an auxiliary data
  source,'' in \emph{Proceedings of the 22nd International Conference on
  Machine Learning}.\hskip 1em plus 0.5em minus 0.4em\relax ACM, 2005, pp.
  505--512.

\bibitem{kienzle2006personalized}
W.~Kienzle and K.~Chellapilla, ``Personalized handwriting recognition via
  biased regularization,'' in \emph{Proceedings of the 24th International
  Conference on Machine Learning (ICML)}.\hskip 1em plus 0.5em minus
  0.4em\relax ACM, 2006, pp. 457--464.

\bibitem{yang2007cross}
J.~Yang, R.~Yan, and A.~G. Hauptmann, ``Cross-domain video concept detection
  using adaptive svms,'' in \emph{Proceedings of the 15th International
  Conference on Multimedia}.\hskip 1em plus 0.5em minus 0.4em\relax ACM, 2007,
  pp. 188--197.

\bibitem{duan2009domain}
L.~Duan, I.~W. Tsang, D.~Xu, and T.~S. Chua, ``Domain adaptation from multiple
  sources via auxiliary classifiers,'' in \emph{Proceedings of the 26th Annual
  International Conference on Machine Learning}, 2009.

\bibitem{rettinger2006boosting}
A.~Rettinger, M.~Zinkevich, and M.~Bowling, ``Boosting expert ensembles for
  rapid concept recall,'' in \emph{Proceedings of the 21st National Conference
  on Artificial Intelligence}, vol.~21, no.~1.\hskip 1em plus 0.5em minus
  0.4em\relax AAAI, 2006, p. 464.

\bibitem{luo2008transfer}
P.~Luo, F.~Zhuang, H.~Xiong, Y.~Xiong, and Q.~He, ``Transfer learning from
  multiple source domains via consensus regularization,'' in \emph{Proceedings
  of the 17th ACM Conference on Information and Knowledge Management}.\hskip
  1em plus 0.5em minus 0.4em\relax ACM, 2008, pp. 103--112.

\bibitem{ruckert2008kernel}
U.~R{\"u}ckert and S.~Kramer, ``Kernel-based inductive transfer,'' in
  \emph{Machine Learning and Knowledge Discovery in Databases}.\hskip 1em plus
  0.5em minus 0.4em\relax Springer, 2008, pp. 220--233.

\bibitem{mehta1995mdl}
M.~Mehta, J.~Rissanen, and R.~Agrawal, ``Mdl-based decision tree pruning,'' in
  \emph{Proceedings of the 1st International Conference on Knowledge Discovery
  and Data Mining (KDD)}.\hskip 1em plus 0.5em minus 0.4em\relax AAAI, 1995,
  pp. 216--221.

\bibitem{zhang2012ensemble}
C.~Zhang and Y.~Ma, \emph{Ensemble Machine Learning}.\hskip 1em plus 0.5em
  minus 0.4em\relax Springer, 2012.

\bibitem{pan2010survey}
S.~J. Pan and Q.~Yang, ``A survey on transfer learning,'' \emph{IEEE
  Transactions on Knowledge and Data Engineering}, vol.~22, no.~10, pp.
  1345--1359, 2010.

\bibitem{breiman2001random}
L.~Breiman, ``Random forests,'' \emph{Machine Learning}, vol.~45, no.~1, pp.
  5--32, 2001.

\bibitem{breiman1984classification}
L.~Breiman, J.~Friedman, C.~J. Stone, and R.~A. Olshen, \emph{Classification
  and Regression Trees}.\hskip 1em plus 0.5em minus 0.4em\relax CRC press,
  1984.

\bibitem{lin1991divergence}
J.~Lin, ``Divergence measures based on the shannon entropy,'' \emph{IEEE
  Transactions on Information Theory}, vol.~37, no.~1, pp. 145--151, 1991.

\bibitem{hunt1966experiments}
E.~B. Hunt, J.~Marin, and P.~J. Stone, \emph{Experiments in Induction.}\hskip
  1em plus 0.5em minus 0.4em\relax Academic Press, 1966.

\bibitem{quinlan1993c4}
J.~R. Quinlan, \emph{C4.5: Programs for Machine Learning}.\hskip 1em plus 0.5em
  minus 0.4em\relax Morgan Kaufmann, 1993, vol.~1.

\bibitem{duan2012domain}
L.~Duan, D.~Xu, and I.~W. Tsang, ``Domain adaptation from multiple sources: A
  domain-dependent regularization approach,'' \emph{IEEE Transactions on Neural
  Networks and Learning Systems}, vol.~23, no.~3, pp. 504--518, 2012.

\bibitem{dai2007boosting}
W.~Dai, Q.~Yang, G.~R. Xue, and Y.~Yu, ``Boosting for transfer learning,'' in
  \emph{Proceedings of the 24th International Conference on Machine Learning},
  2007.

\bibitem{Bache+Lichman:2013}
\BIBentryALTinterwordspacing
K.~Bache and M.~Lichman, ``{UCI} machine learning repository,'' 2013. [Online].
  Available: \url{http://archive.ics.uci.edu/ml}
\BIBentrySTDinterwordspacing

\bibitem{cortez2009modeling}
P.~Cortez, A.~Cerdeira, F.~Almeida, T.~Matos, and J.~Reis, ``Modeling wine
  preferences by data mining from physicochemical properties,'' \emph{Decision
  Support Systems}, vol.~47, no.~4, pp. 547--553, 2009.

\bibitem{mnistlecun}
\BIBentryALTinterwordspacing
Y.~Lecun and C.~Cortes, ``{The MNIST database of handwritten digits},'' 1998.
  [Online]. Available: \url{http://yann.lecun.com/exdb/mnist/}
\BIBentrySTDinterwordspacing

\bibitem{hull1994database}
J.~J. Hull, ``A database for handwritten text recognition research,''
  \emph{IEEE Transactions on Pattern Analysis and Machine Intelligence},
  vol.~16, no.~5, pp. 550--554, 1994.

\bibitem{subramanya2012recognizing}
A.~Subramanya, A.~Raj, J.~A. Bilmes, and D.~Fox, ``Recognizing activities and
  spatial context using wearable sensors,'' in \emph{Proceedings of the 22nd
  Annual Conference on Uncertainty in Artificial Intelligence (UAI)}.\hskip 1em
  plus 0.5em minus 0.4em\relax AUAI Press, 2006, pp. 494--502.

\bibitem{harel2011learning}
M.~Harel and S.~Mannor, ``Learning from multiple outlooks,'' in
  \emph{Proceedings of the 28th International Conference on Machine Learning
  (ICML)}.\hskip 1em plus 0.5em minus 0.4em\relax ACM, 2011, pp. 401--408.

\bibitem{NIPS2013_4928}
G.~Louppe, L.~Wehenkel, A.~Sutera, and P.~Geurts, ``Understanding variable
  importances in forests of randomized trees,'' in \emph{Advances in Neural
  Information Processing Systems 26}, C.~Burges, L.~Bottou, M.~Welling,
  Z.~Ghahramani, and K.~Weinberger, Eds.\hskip 1em plus 0.5em minus 0.4em\relax
  Curran Associates, Inc., 2013, pp. 431--439.

\bibitem{galar2012review}
M.~Galar, A.~Fernandez, E.~Barrenechea, H.~Bustince, and F.~Herrera, ``A review
  on ensembles for the class imbalance problem: bagging-, boosting-, and
  hybrid-based approaches,'' \emph{IEEE Transactions on Systems, Man, and
  Cybernetics, Part C: Applications and Reviews}, vol.~42, no.~4, pp. 463--484,
  2012.

\bibitem{kamishima2009trbagg}
T.~Kamishima, M.~Hamasaki, and S.~Akaho, ``Trbagg: A simple transfer learning
  method and its application to personalization in collaborative tagging,'' in
  \emph{Proceedings of the 9th International Conference on Data Mining
  (ICDM)}.\hskip 1em plus 0.5em minus 0.4em\relax IEEE, 2009, pp. 219--228.

\bibitem{daume07frustratingly}
H.~{Daum{\'e} III}, ``Frustratingly easy domain adaptation,'' in
  \emph{Conference of the Association for Computational Linguistics (ACL)},
  2007.

\bibitem{goussies2014transfer}
N.~A. Goussies, S.~Ubalde, and M.~Mejail, ``Transfer learning decision forests
  for gesture recognition,'' \emph{Journal of Machine Learning Research},
  vol.~15, pp. 3667--3690, 2014.

\bibitem{shawe2003pac}
J.~Shawe-Taylor and J.~Langford, ``Pac-bayes \& margins,'' in \emph{Proceedings
  of the 17th Neural Information Processing Systems (NIPS)}.\hskip 1em plus
  0.5em minus 0.4em\relax MIT Press, 2003, pp. 439--446.

\bibitem{mcallester2003simplified}
D.~McAllester, ``Simplified pac-bayesian margin bounds,'' in \emph{Learning
  Theory and Kernel Machines}.\hskip 1em plus 0.5em minus 0.4em\relax Springer,
  2003, pp. 203--215.

\bibitem{germain2009pac}
P.~Germain, A.~Lacasse, F.~Laviolette, and M.~Marchand, ``Pac-bayesian learning
  of linear classifiers,'' in \emph{Proceedings of the 26th Annual
  International Conference on Machine Learning}.\hskip 1em plus 0.5em minus
  0.4em\relax ACM, 2009, pp. 353--360.

\bibitem{germain2015risk}
P.~Germain, A.~Lacasse, F.~Laviolette, M.~Marchand, and J.-F. Roy, ``Risk
  bounds for the majority vote: From a pac-bayesian analysis to a learning
  algorithm,'' \emph{Journal of Machine Learning Research}, vol.~16, pp.
  787--860, 2015.

\bibitem{schapire1998boosting}
R.~E. Schapire, Y.~Freund, P.~Bartlett, and W.~S. Lee, ``Boosting the margin: A
  new explanation for the effectiveness of voting methods,'' \emph{The Annals
  of Statistics}, vol.~26, no.~5, pp. 1651--1686, 1998.

\bibitem{levy2012teaching}
O.~Levy and S.~Markovitch, ``Teaching machines to learn by metaphors,'' in
  \emph{Proceedings of the 26th Conference on Artificial Intelligence}.\hskip
  1em plus 0.5em minus 0.4em\relax AAAI, 2012, pp. 991--997.

\bibitem{rodner2008learning}
E.~Rodner and J.~Denzler, ``Learning with few examples using a constrained
  gaussian prior on randomized trees,'' in \emph{Proceedings of the Vision,
  Modeling, and Visualization Conference (VMV)}.\hskip 1em plus 0.5em minus
  0.4em\relax Pro Universitate, 2008, pp. 159--168.

\bibitem{tommasi2010safety}
T.~Tommasi, F.~Orabona, and B.~Caputo, ``Safety in numbers: Learning categories
  from few examples with multi model knowledge transfer,'' in \emph{Proceedings
  of the 23rd Conference on Computer Vision and Pattern Recognition
  (CVPR)}.\hskip 1em plus 0.5em minus 0.4em\relax IEEE, 2010, pp. 3081--3088.

\bibitem{rodner2011learning}
E.~Rodner and J.~Denzler, ``Learning with few examples for binary and
  multiclass classification using regularization of randomized trees,''
  \emph{Pattern Recognition Letters}, vol.~32, no.~2, pp. 244--251, 2011.

\bibitem{baxter2000model}
J.~Baxter, ``A model of inductive bias learning,'' \emph{J. Artif. Intell.
  Res.(JAIR)}, vol.~12, pp. 149--198, 2000.

\bibitem{pratt1991direct}
L.~Y. Pratt, J.~Mostow, C.~A. Kamm, and A.~A. Kamm, ``Direct transfer of
  learned information among neural networks,'' in \emph{Proceedings of the 9th
  National Conference on Artificial Intelligence}.\hskip 1em plus 0.5em minus
  0.4em\relax AAAI, 1991, pp. 584--589.

\bibitem{thrun1994learning}
S.~Thrun and T.~M. Mitchell, ``Learning one more thing,'' DTIC Document, Tech.
  Rep., 1994.

\bibitem{eaton2008modeling}
E.~Eaton, M.~desJardins, and T.~Lane, ``Modeling transfer relationships between
  learning tasks for improved inductive transfer,'' \emph{Machine Learning and
  Knowledge Discovery in Databases}, pp. 317--332, 2008.

\bibitem{eruhimov2008transferring}
V.~Eruhimov, V.~Martyanov, and A.~Polovinkin, ``Transferring knowledge by prior
  feature sampling.'' in \emph{FSDM}, 2008, pp. 135--147.

\bibitem{rodner2009learning}
E.~Rodner and J.~Denzler, ``Learning with few examples by transferring feature
  relevance,'' in \emph{Pattern Recognition}.\hskip 1em plus 0.5em minus
  0.4em\relax Springer, 2009, pp. 252--261.

\bibitem{pardoe2010boosting}
D.~Pardoe and P.~Stone, ``Boosting for regression transfer,'' in
  \emph{Proceedings of the 27th International Conference on Machine learning
  (ICML)}.\hskip 1em plus 0.5em minus 0.4em\relax ACM, 2010, pp. 863--870.

\bibitem{yao2010boosting}
Y.~Yao and G.~Doretto, ``Boosting for transfer learning with multiple
  sources,'' in \emph{Proceedings of the 23rd Conference on Computer Vision and
  Pattern Recognition (CVPR)}.\hskip 1em plus 0.5em minus 0.4em\relax IEEE,
  2010, pp. 1855--1862.

\bibitem{lu2013selective}
Z.~Lu, W.~Pan, E.~W. Xiang, Q.~Yang, L.~Zhao, and E.~Zhong, ``Selective
  transfer learning for cross domain recommendation,'' in \emph{Proceedings of
  the 2013 SIAM International Conference on Data Mining}.\hskip 1em plus 0.5em
  minus 0.4em\relax SDM, 2013, pp. 641--649.

\bibitem{jiang2007instance}
J.~Jiang and C.~Zhai, ``Instance weighting for domain adaptation in {NLP},'' in
  \emph{ACL}, vol.~7, 2007, pp. 264--271.

\bibitem{wu2004improving}
P.~Wu and T.~G. Dietterich, ``Improving {SVM} accuracy by training on auxiliary
  data sources,'' in \emph{Proceedings of the 21st International Conference on
  Machine Learning (ICML)}.\hskip 1em plus 0.5em minus 0.4em\relax ACM, 2004,
  pp. 110--117.

\bibitem{saenko2010adapting}
K.~Saenko, B.~Kulis, M.~Fritz, and T.~Darrell, ``Adapting visual category
  models to new domains,'' in \emph{Proceedings of the European Conference on
  Computer Vision (ECCV)}.\hskip 1em plus 0.5em minus 0.4em\relax Springer,
  2010, pp. 213--226.

\bibitem{raina2007self}
R.~Raina, A.~Battle, H.~Lee, B.~Packer, and A.~Y. Ng, ``Self-taught learning:
  transfer learning from unlabeled data,'' in \emph{Proceedings of the 24th
  International Conference on Machine Learning (ICML)}.\hskip 1em plus 0.5em
  minus 0.4em\relax ACM, 2007, pp. 759--766.

\bibitem{evgeniou2007multi}
A.~Evgeniou and M.~Pontil, ``Multi-task feature learning,'' \emph{Advances in
  Neural Information Processing Systems}, vol.~19, p.~41, 2007.

\bibitem{pan2008transfer}
S.~J. Pan, J.~T. Kwok, and Q.~Yang, ``Transfer learning via dimensionality
  reduction,'' in \emph{Proceedings of the 23rd Conference on Artificial
  Intelligence}.\hskip 1em plus 0.5em minus 0.4em\relax AAAI, 2008, pp.
  677--682.

\bibitem{daume2006domain}
H.~Daum{\'e}~III and D.~Marcu, ``Domain adaptation for statistical
  classifiers.'' \emph{J. Artif. Intell. Res.(JAIR)}, vol.~26, pp. 101--126,
  2006.

\bibitem{ben2010theory}
S.~Ben-David, J.~Blitzer, K.~Crammer, A.~Kulesza, F.~Pereira, and J.~W.
  Vaughan, ``A theory of learning from different domains,'' \emph{Machine
  Learning}, vol.~79, no. 1-2.

\bibitem{jiang2008literature}
J.~Jiang, ``A literature survey on domain adaptation of statistical
  classifiers,'' \emph{URL:
  http://sifaka.cs.uiuc.edu/jiang4/domainadaptation/survey}, 2008.

\bibitem{caruana1998multitask}
R.~Caruana, \emph{Multitask Learning}.\hskip 1em plus 0.5em minus 0.4em\relax
  Springer, 1998.

\bibitem{evgeniou2004regularized}
T.~Evgeniou and M.~Pontil, ``Regularized multi--task learning,'' in
  \emph{Proceedings of the 10th ACM SIGKDD International Conference on
  Knowledge Discovery and Data Mining}, 2004.

\bibitem{dredze2010multi}
M.~Dredze, A.~Kulesza, and K.~Crammer, ``Multi-domain learning by
  confidence-weighted parameter combination,'' \emph{Machine Learning},
  vol.~79, no. 1-2, pp. 123--149, 2010.

\bibitem{jie2011multiclass}
L.~Jie, T.~Tommasi, and B.~Caputo, ``Multiclass transfer learning from
  unconstrained priors,'' in \emph{International Conference on Computer Vision
  (ICCV)}.\hskip 1em plus 0.5em minus 0.4em\relax IEEE, 2011, pp. 1863--1870.

\bibitem{blitzer2006domain}
J.~Blitzer, R.~McDonald, and F.~Pereira, ``Domain adaptation with structural
  correspondence learning,'' in \emph{Proceedings of the 2006 Conference on
  Empirical Methods in Natural Language Processing}, pp. 120--128.

\bibitem{jiang2008cross}
W.~Jiang, E.~Zavesky, S.-F. Chang, and A.~Loui, ``Cross-domain learning methods
  for high-level visual concept classification,'' in \emph{Proceedings of the
  15th IEEE International Conference on Image Processing (ICIP), 2008.}

\bibitem{duan2012exploiting}
L.~Duan, D.~Xu, and S.-F. Chang, ``Exploiting web images for event recognition
  in consumer videos: A multiple source domain adaptation approach,'' in
  \emph{Computer Vision and Pattern Recognition (CVPR)}.\hskip 1em plus 0.5em
  minus 0.4em\relax IEEE, 2012, pp. 1338--1345.

\bibitem{fernando2013unsupervised}
B.~Fernando, A.~Habrard, M.~Sebban, and T.~Tuytelaars, ``Unsupervised visual
  domain adaptation using subspace alignment,'' in \emph{International
  Conference on Computer Vision (ICCV)}.\hskip 1em plus 0.5em minus 0.4em\relax
  IEEE, 2013, pp. 2960--2967.

\bibitem{argyriou2008convex}
A.~Argyriou, T.~Evgeniou, and M.~Pontil, ``Convex multi-task feature
  learning,'' \emph{Machine Learning}, vol.~73, no.~3, pp. 243--272, 2008.

\bibitem{ando2005framework}
R.~K. Ando and T.~Zhang, ``A framework for learning predictive structures from
  multiple tasks and unlabeled data,'' \emph{The Journal of Machine Learning
  Research}, vol.~6, pp. 1817--1853, 2005.

\bibitem{lounici2009taking}
K.~Lounici, M.~Pontil, A.~Tsybakov, and S.~Geer, ``Taking advantage of sparsity
  in multi-task learning,'' in \emph{Proceedings of the 22nd Annual Conference
  on Learning Theory (COLT)}.\hskip 1em plus 0.5em minus 0.4em\relax ACL, 2009.

\bibitem{baktashmotlagh2013unsupervised}
M.~Baktashmotlagh, M.~T. Harandi, B.~C. Lovell, and M.~Salzmann, ``Unsupervised
  domain adaptation by domain invariant projection,'' in \emph{International
  Conference on Computer Vision (ICCV)}.\hskip 1em plus 0.5em minus 0.4em\relax
  IEEE, 2013, pp. 769--776.

\bibitem{lawrence2004learning}
N.~D. Lawrence and J.~C. Platt, ``Learning to learn with the informative vector
  machine,'' in \emph{Proceedings of the 21st International Conference on
  Machine Learning (ICML)}.\hskip 1em plus 0.5em minus 0.4em\relax ACM, 2004,
  p.~65.

\bibitem{schwaighofer2005hierarchical}
A.~Schwaighofer, V.~Tresp, and K.~Yu, ``Hierarchical bayesian modelling with
  gaussian processes,'' in \emph{Advances in Neural Information Processing
  Systems 17}, 2005.

\bibitem{yu2005learning}
K.~Yu, V.~Tresp, and A.~Schwaighofer, ``Learning gaussian processes from
  multiple tasks,'' in \emph{Proceedings of the 22nd International Conference
  on Machine Learning (ICML)}.\hskip 1em plus 0.5em minus 0.4em\relax ACM,
  2005, pp. 1012--1019.

\bibitem{bonilla2007multi}
E.~V. Bonilla, K.~M. Chai, and C.~K. Williams, ``Multi-task {G}aussian process
  prediction.'' in \emph{NIPS}, vol.~20, 2007.

\bibitem{duan2009domain-b}
L.~Duan, I.~W. Tsang, D.~Xu, and S.~J. Maybank, ``Domain transfer svm for video
  concept detection,'' in \emph{Computer Vision and Pattern Recognition
  (CVPR)}.\hskip 1em plus 0.5em minus 0.4em\relax IEEE, 2009, pp. 1375--1381.

\bibitem{duan2012domain-b}
L.~Duan, I.~W. Tsang, and D.~Xu, ``Domain transfer multiple kernel learning,''
  \emph{IEEE Transactions on Pattern Analysis and Machine Intelligence},
  vol.~34, no.~3, pp. 465--479, 2012.

\bibitem{duan2012visual}
L.~Duan, D.~Xu, I.-H. Tsang, and J.~Luo, ``Visual event recognition in videos
  by learning from web data,'' \emph{IEEE Transactions on Pattern Analysis and
  Machine Intelligence}, vol.~34, no.~9, pp. 1667--1680, 2012.

\bibitem{won2007transfer}
J.~won Lee and C.~Giraud-Carrier, ``Transfer learning in decision trees,'' in
  \emph{Proceedings of the International Joint Conference on Neural Networks
  (IJCNN)}.\hskip 1em plus 0.5em minus 0.4em\relax IEEE, 2007, pp. 726--731.

\bibitem{domingos2000mining}
P.~Domingos and G.~Hulten, ``Mining high-speed data streams,'' in
  \emph{Proceedings of the 6th ACM SIGKDD International Conference on Knowledge
  Discovery and Data Mining}, 2000, pp. 71--80.

\bibitem{jin2003efficient}
R.~Jin and G.~Agrawal, ``Efficient decision tree construction on streaming
  data,'' in \emph{Proceedings of the 9th International Conference on Knowledge
  Discovery and Data Mining (SIGKDD)}.\hskip 1em plus 0.5em minus 0.4em\relax
  ACM, 2003, pp. 571--576.

\bibitem{hulten2005mining}
G.~Hulten, P.~Domingos, and L.~Spencer, ``Mining massive data streams,''
  \emph{The Journal of Machine Learning Research}, 2005.

\bibitem{hulten2001mining}
G.~Hulten, L.~Spencer, and P.~Domingos, ``Mining time-changing data streams,''
  in \emph{Proceedings of the 7th International Conference on Knowledge
  Discovery and Data Mining (SIGKDD)}.\hskip 1em plus 0.5em minus 0.4em\relax
  ACM, 2001, pp. 97--106.

\bibitem{gama2006decision}
J.~Gama, R.~Fernandes, and R.~Rocha, ``Decision trees for mining data
  streams,'' \emph{Intelligent Data Analysis}, vol.~10, no.~1, pp. 23--45,
  2006.

\bibitem{nunez2007learning}
M.~N{\'u}{\~n}ez, R.~Fidalgo, and R.~Morales, ``Learning in environments with
  unknown dynamics: Towards more robust concept learners,'' \emph{The Journal
  of Machine Learning Research}, vol.~8, pp. 2595--2628, 2007.

\bibitem{faddoul2010boosting}
J.~B. Faddoul, B.~Chidlovskii, F.~Torre, and R.~Gilleron, ``Boosting multi-task
  weak learners with applications to textual and social data,'' in
  \emph{Proceedings of the 9th International Conference on Machine Learning and
  Applications (ICMLA)}.\hskip 1em plus 0.5em minus 0.4em\relax IEEE, 2010, pp.
  367--372.

\bibitem{faddoul2012learning}
J.~B. Faddoul, B.~Chidlovskii, R.~Gilleron, and F.~Torre, ``Learning multiple
  tasks with boosted decision trees,'' in \emph{Machine Learning and Knowledge
  Discovery in Databases}.\hskip 1em plus 0.5em minus 0.4em\relax Springer,
  2012, pp. 681--696.

\bibitem{simm2014treeMTL}
J.~Simm and M.~Sugiyama, ``Tree-based ensemble multi-task learning method for
  classification and regression,'' \emph{IEICE Transactions on Information and
  Systems}, vol.~97, no.~6, pp. 1677--1681, 2014.

\end{thebibliography}

%

%
%

%



\vspace{-1 cm}
\begin{IEEEbiographynophoto}{Noam Segev}
Received his B.Sc. degree in software engineering from the Technion-Israel Institute of Technology, 
graduating with honor in 2010. 
His research interests include machine learning and pattern recognition.
\end{IEEEbiographynophoto}

\vspace{-2 cm}
\begin{IEEEbiographynophoto}{Maayan Harel}
Received her B.Sc. degree in biomedical engineering from Tel-Aviv University, in 2009, 
and her Ph.D. degree in electrical engineering from the Technion-Israel Institute of Technology, in 2015.
Her research interests include machine learning, pattern recognition and statistics.
\end{IEEEbiographynophoto}


\vspace{-2 cm}
\begin{IEEEbiographynophoto}{Shie Mannor}
Received his B.Sc. degree in electrical engineering, B.A. degree in mathematics, and Ph.D. degree in electrical
engineering from the Technion-Israel Institute of Technology, in 1996, 1996, and 2002, respectively.
From 2002 to 2004, he was a Fulbright scholar and a postdoctoral associate at M.I.T. 
He was with the Department of Electrical and Computer Engineering at McGill University from 2004 to 2010 
where he was the Canada Research chair in Machine Learning.
He has been with the Faculty of Electrical Engineering at the Technion since 2008 where he is currently a professor.
His research interests include machine learning and pattern recognition,
planning and control, multi-agent systems, and communications. 
\end{IEEEbiographynophoto}

\vspace{-2 cm}
\begin{IEEEbiographynophoto}{Koby Crammer}
Received his Ph.D. (2004) in Computer science and BSc (1999) in mathematics, 
physics and computer science all from the Hebrew university Jerusalem and all Summa cum laude. 
He was a postdoctoral fellow and a research associate at the Department of Computer and Information Science, 
University of Pennsylvania between from 2004 to 2009.
Koby joined the Department of Electrical Engineering at the Technion-Israel Institute of Technology since 2009.
Koby published more than fifteen journal papers and sixty conference papers, 
member of the editorial board of machine learning journal and journal of machine learning research, 
served in the program committee of leading conferences in machine learning and organized few international workshops.
His research interests are in the design, analysis and study of machine learning and recognition methods and 
their application to real world data and especially natural language processing, and big-complex data sets. 
Recipient of the Rothschild fellowship, Fulbright fellowship (declined), Alon fellowship and Horev fellowship.
\end{IEEEbiographynophoto}

\vspace{-2 cm}
\begin{IEEEbiographynophoto}{Ran El-Yaniv}
Received his Ph.D. in theoretical computer science from Toronto University.
He is an Associate Professor of Computer Science at the Technion-Israel Institute of Technology, 
and  his research interests include machine learning, online computation and computational finance.
Ran is a co-author of the book “Online Computation and Competitive 
Analysis” (Cambridge university Press, 1998), and member of the 
editorial boards of the Journals of AI Research and Machine Learning Research.
\end{IEEEbiographynophoto}




\end{document}